\useunder{\uline}{\ul}{}
\DeclareMathOperator*{\argmax}{arg\,max}
\DeclareMathOperator*{\argmin}{arg\,min}
\newtheorem*{theorem*}{Theorem}
\newtheorem{theorem}{Theorem}[section]
\newtheorem{lemma}[theorem]{Lemma}
\newtheorem{proposition}[theorem]{Proposition}
\newcommand{\ltwo}{$\ell_2$}
\newcommand{\x}{\mathbf{x}}
\newcommand{\y}{\mathbf{y}}
\newcommand{\wt}{\widehat{\theta}}
\newcommand{\fxt}{f(\x;\wt)}
\newcommand{\faxt}{f(a(\x);\wt)}
\newcommand{\fxit}{f(\x_i;\wt)}
\newcommand{\faxit}{f(a(\x_i);\wt)}
\newcommand{\ma}{\mathcal{A}}
\newcommand{\mP}{\mathbf{P}}
\newcommand{\mI}{\mathbb{I}}
\newcommand{\rpp}{r_{\mP'}}
\newcommand{\rpa}{r_{\mP, \ma}}
\newcommand{\wrpp}{\widehat{r}_{\mP'}}
\newcommand{\rp}{r_\mP}
\newcommand{\wrp}{\widehat{r}_{\mP}}
\newcommand{\X}{\mathbf{X}}
\newcommand{\Y}{\mathbf{Y}}
\newcommand{\base}{\textsf{B}}
\newcommand{\va}{\textsf{VA}}
\newcommand{\ra}{\textsf{RVA}}
\newcommand{\vwa}{\textsf{VWA}}
\newcommand{\rwa}{\textsf{RWA}}
\setlist[itemize]{leftmargin=*}
  \providecommand\BibTeX{{%
    \normalfont B\kern-0.5em{\scshape i\kern-0.25em b}\kern-0.8em\TeX}}}
\begin{document}

\title{Toward Learning Robust and Invariant Representations with Alignment Regularization and Data Augmentation}

\author{Haohan Wang}
\affiliation{%
  \institution{Carnegie Mellon University}
  \city{Pittsburgh}
  \state{PA}
  \country{USA}
}
\email{haohanw@cs.cmu.edu}

\author{Zeyi Huang}
\affiliation{%
  \institution{University of Wisconsin-Madison}
  \city{Madison}
  \country{WI}
  \country{USA}
}
\email{zeyih@andrew.cmu.edu}

\author{Xindi Wu}
\affiliation{%
  \institution{Carnegie Mellon University}
  \city{Pittsburgh}
  \state{PA}
  \country{USA}
}
\email{xindiw@andrew.cmu.edu}

\author{Eric P. Xing}
\affiliation{%
  \institution{Carnegie Mellon University}
  \city{Pittsburgh}
  \state{PA}
  \country{USA}
}
\email{epxing@cs.cmu.edu}

\renewcommand{\shortauthors}{Wang, et al.}

\begin{abstract}
  Data augmentation has been proven to be an effective
technique for developing machine learning models 
that are 
robust to known classes of distributional shifts 
(\textit{e.g.}, rotations of images),
and alignment regularization is a technique often 
used together with data augmentation to further 
help the model learn 
representations invariant to the shifts used 
to augment the data. 
In this paper, 
motivated by a proliferation of options of alignment regularizations, 
we seek to evaluate the performances of several popular design choices along the dimensions of robustness and invariance, for which we introduce a new test procedure. 
Our synthetic experiment results
speak to the benefits of 
squared $\ell_2$ norm regularization.
Further, 
we also formally analyze the behavior 
of alignment regularization
to complement our empirical study 
under assumptions we consider realistic. 
Finally, we test this simple technique we identify 
(worst-case data augmentation with squared $\ell_2$ norm alignment regularization) 
and show that the benefits of this method outrun 
those of the specially designed methods. 
We also release a software package 
in both TensorFlow and PyTorch 
for users to use the method 
with a couple of lines\footnote{ \href{https://github.com/jyanln/AlignReg}{https://github.com/jyanln/AlignReg}}. 
\end{abstract}

\begin{CCSXML}
<ccs2012>
<concept>
<concept_id>10010147.10010257.10010321.10010337</concept_id>
<concept_desc>Computing methodologies~Regularization</concept_desc>
<concept_significance>500</concept_significance>
</concept>
<concept>
<concept_id>10010147.10010178.10010224</concept_id>
<concept_desc>Computing methodologies~Computer vision</concept_desc>
<concept_significance>500</concept_significance>
</concept>
<concept>
<concept_id>10010147.10010257.10010258.10010259</concept_id>
<concept_desc>Computing methodologies~Supervised learning</concept_desc>
<concept_significance>500</concept_significance>
</concept>
</ccs2012>
\end{CCSXML}

\ccsdesc[500]{Computing methodologies~Regularization}
\ccsdesc[500]{Computing methodologies~Computer vision}
\ccsdesc[500]{Computing methodologies~Supervised learning}

\keywords{machine learning, data augmentation, robustness, trustworthy}


\maketitle

\section{Introduction}

Data augmentation, 
\textit{i.e.}, to increase the dataset size through 
generating new samples by 
transforming the existing samples with some predefined functions, 
is probably one of the most often used techniques to improve 
a machine learning model's performance. 
It has helped machine learning models achieve high prediction accuracy 
over various benchmarks 
\citep[\textit{e.g.,}][]{LimKKKK19,HoLCSA19,Zhong0KL020,BuslaevIKPDK20,Ghiasi20,Kostrikov21}. 

In addition to improving prediction accuracy, 
the community has also leveraged data augmentation to help the models 
learn more robust representations that can generalize 
to the datasets distributed differently 
\citep[\textit{e.g.,}][]{zheng2016improving,SunYOHX18,HuangWXH20,MinMDPL20}. 
To improve robustness, the community usually designed the transformation functions used to augment the data in correspondence to the transformations we see in the real world \citep{hernandez2020data}, 
such as the changes of image texture or contrast. 
Thus, models trained with these augmented data 
are more likely to be invariant to these designed transformations, 
such as the texture or contrast variations of the input images. 

To further help a model learn representations invariant to the transformations, 
we can regularize the model so that the distance between representations learned by the model 
from a pair of data (the original one and the transformed counterpart) will be small.
This regularization has been used extensively recently to help models learn more robust and invariant representations \citep[\textit{e.g.,}][]{liang2018learning,kannan2018adversarial,ZhangYJXGJ19,Hendrycks2020augmix}. 
Motivated by this popularity, 
this paper mainly studies the behaviors of this regularization, which we refer to as 
\emph{alignment regularization} 
(AR).
In particular, we seek to answer the question:
\emph{how should we use alignment regularization to take advantage of the augmented data to the fullest extent to learn robust and invariant models?}

To answer this, 
we first conduct a range of experiments 
over image classification benchmarks 
to evaluate how popular variants of AR
contribute to learning robust and invariant models. 
We test for accuracy, robustness, and invariance, 
for which we propose a new test procedure.
Our empirical study favors 
the squared \ltwo{} norm. 
Our contributions of this paper are as follows. 

\begin{itemize}
\item With a new invariance test, we show 
that 
alignment regularization is important
to help the model 
learn representations 
invariant to the transformation function, 
and squared \ltwo{} norm is considered  
the favorable choice as assessed by 
a variety of empirical evaluations (Section~\ref{sec:motivate}).  
\item We formalize a generalization error bound 
for models trained with AR and augmented data
(Section~\ref{sec:method}). 
\item We test the method we identified
(squared \ltwo{} norm as AR) 
in multiple scenarios.
We notice that this generic approach can compete with methods specially designed for different scenarios, which we believe endorses its empirical strength
(Section~\ref{sec:exp}). 
\end{itemize}

\section{Related Work and Key Differences}
\label{sec:related}
Tracing back to the earliest convolutional neural networks, \citep{wang2017origin}, 
we notice that even early models for the MNIST dataset 
have been boosted by data augmentation \citep{abu1990learning,simard1991tangent,lecun1998gradient}. 
Later, the rapidly growing machine learning community 
has seen a proliferate development of data augmentation techniques 
that have helped models climb the state-of-the-art ladder \citep{shorten2019survey}. 
Among the augmentation techniques, 
the most relevant one to this paper is to
generate the samples (with constraint) 
that maximize the training loss along with training \citep{FawziSTF16}.

While the above paragraph mainly discusses 
how to generate the augmented samples, 
we mainly study
how to train the models with augmented samples. 
For example, instead of directly mixing augmented samples 
with the original samples, 
one can consider regularizing the representations (or outputs)
of original samples and augmented samples 
to be close under a distance metric (which we refer to as alignment regularization, AR). 
Many concrete ideas have been explored in different contexts. 
For example, $\ell_2$ distance and cosine similarities
between internal representations in speech recognition
\citep{liang2018learning}, 
squared $\ell_2$ distance between logits \citep{kannan2018adversarial},
or KL divergence between softmax outputs \citep{ZhangYJXGJ19} in adversarially robust vision models, 
Jensen–Shannon divergence (of three distributions) 
between embeddings for texture invariant image classification \citep{Hendrycks2020augmix}. 
These are but a few highlights of the concrete and successful implementations for different applications
out of a vast collection (\textit{e.g.}, \citep{WuMWZGXX19,ZFY019, zhang2019regularizing, Shah_2019_CVPR, asai2020logicguided,sajjadi2016regularization,zheng2016improving,xie2015hyper}), 
and we can expect methods permuting these three elements (distance metrics, representation or outputs, and applications) 
to be discussed in the future. 
Further, 
given the popularity of GAN \citep{goodfellow2016nips} 
and domain adversarial neural network \citep{ganin2016domain}, 
we can also expect the distance metric generalizes 
to a specialized discriminator (\textit{i.e.}, a classifier), 
which can be intuitively understood as a calculated
(usually maximized) distance measure,
and one example here is the 
Wasserstein-1 metric \citep{arjovsky2017wasserstein,gulrajani2017improved}. 

\textbf{Key Differences:}
With this rich collection of regularizing choices, 
which one method should we consider in general? 
More importantly, 
do we need the regularization at all? 
These questions are important 
for multiple reasons,
especially since 
sometimes 
AR may worsen the results \citep{jeong2019consistency}. 
In this paper, we first conduct an empirical study to 
show that 
AR 
(especially squared \ltwo{} norm) 
can help 
learn robust and invariant models,
we then also derive generalization error bounds to 
complement our empirical conclusion. 


There are also several previous discussions 
regarding the detailed understandings of data augmentation \citep{yang2019invariance, chen2019grouptheoretic,hernndezgarca2018data,rajput2019does,DaoGRSSR19,ghosh2021on,zhang2021how}, 
among which, \citep{yang2019invariance} is probably the most relevant 
as it also defends the usage of the AR. 
In addition to what is reported in \citep{yang2019invariance}, our work also connects to invariance
and shows that another advantage of AR is to learn invariant representations. 




\section{Accuracy, Robustness, and Invariance}
\label{sec:background}
This section discusses the three major evaluation metrics we will use to test AR. 
We will first recapitulate the background of accuracy and robustness. Then we will introduce our definition of invariance and our proposed evaluation.  

\textbf{Notations}
$(\X, \Y)$ denotes the data,
where $\X \in \mathcal{R}^{n\times p}$ and $\Y \in \{0, 1\}^{n\times k}$
(one-hot vectors for $k$ classes).  
$(\x, \y)$ denotes a sample. 
$f(, \theta)$ denotes the model,
which takes in the data and outputs the softmax (probabilities of the prediction) 
and $\theta$ denotes the corresponding parameters. 
$g()$ completes the prediction (\textit{i.e.},
mapping softmax to one-hot prediction). 
$a()$ denotes a function for data augmentation, 
\textit{i.e.},
a transformation function. 
$a\in \ma$, 
which is the set of transformation functions of interest. 
$\mP$ denotes the distribution of $(\x, \y)$.
For any sampled $(\x, \y)$, we can have $(a(\x), \y)$, 
and we use $\mP_a$ to denote the distribution of these transformed samples. 
Further, we use $\mathbf{Q}_{a(\x), \wt}$ to 
denote the distribution of $\faxt$ for $(\x, \y)\sim \mP$. 
$D(\cdot, \cdot)$ is a distance measure over two distributions. 
$r(\theta)$ denotes the risk of model $\theta$. 
$\widehat{\cdot}$ denotes the estimation of the term $\cdot$, 
e.g.,
$\wrp(\wt)$ denotes the empirical risk of the estimated model.   

\subsection{Accuracy}
The community studying the statistical property
of the error bound usually focuses on the expected risk defined as
\begin{align}
    \rp(\wt) =  \mathbb{E}_{(\x, \y) \sim \mP} \mI[g(\fxt) \neq \y], 
    \label{eq:accuracy}
\end{align}
where $\mI[\mathbf{c}]$ is a function that returns $1$ if the condition $\mathbf{c}$ holds. 

In practice, the error is evaluated by replacing $\mP$ with a hold-out test dataset, and the accuracy is $1 - \rp(\wt)$. 

\subsection{Robustness}
We define 
robustness as 
the worst-case expected risk
when the test data is allowed to 
be transformed  
by functions in $\ma$, following \citep[\textit{e.g.},][]{szegedy2013intriguing,goodfellow2014explaining}.
Formally, we study the worst-case error as 
\begin{align}
    \rpa(\wt) =  \mathbb{E}_{(\x, \y) \sim \mP} \max_{a\sim \ma}\mI[g(\faxt) \neq \y],
    \label{eq:robustness}
\end{align}
where we use $\rpa(\wt)$ to denote the robust error as it will depend on $\ma$. 
In practice, 
the robust error is also evaluated 
by replacing $\mP$ with a hold-out dataset.

\subsection{Invariance}
\label{sec:background:invariance}
Further, 
high robustness performances do not necessarily mean 
the model is truly invariant to the transformation functions \citep{hernandez2019learning}, 
and we continue to introduce a new test 
to evaluate the model's behavior in learning representations invariant to the transformations.

\paragraph{Invariance}
If the model can learn a representation invariant to the transformation functions,
it will map the samples of different transformations
to the same representation. 
Intuitively, to measure how invariant a model is to the transformations in $\ma$, 
we can calculate the distances between each pair of the two transformed samples when a sample is transformed with functions in $\ma$. 
Thus, we define the following term to measure invariance:
\begin{align}
    I_{\mP, \ma}(\wt) = \sup_{a_1, a_2\in \ma} D (\mathbf{Q}_{a_1(\x), \wt}, \mathbf{Q}_{a_2(\x), \wt}), 
    \label{eq:invariance}
\end{align}
We suggest using
Wasserstein metric as $D(\cdot, \cdot)$,
considering its favorable properties 
(\textit{e.g.}, see practical examples in Figure 1 of
\citep{cuturi2014fast} or theoretical discussions in \citep{villani2008optimal}).

In practice, we also need to replace $\mP$ 
with a hold-out dataset
so that the evaluation can be performed. 
In addition, we notice that
$I_{\mP, \ma}(\wt)$, 
although intuitive, 
is not convenient in practice because the evaluated values are not bounded. 
Thus, we reformulate it into the following invariance test procedure, 
whose final score will be bounded between 0 and 1 (the higher, the better). 
Therefore, the score can be conveniently discussed together with accuracy and robust accuracy, which are also bounded between 0 and 1.

\paragraph{Invariance test}
Given a family of transformation functions
used in data augmentation $\ma = \{a_1(), a_2(), \dots, a_t()\}$ 
of $t$ elements, 
and a collection of samples (from the hold-out dataset) of the same label $i$, 
denoted as $\X^{(i)}$, 
the evaluation procedure is as follows. 
We first generate the transformed copies of $\X^{(i)}$ with $\ma$, 
resulting in $\X^{(i)}_{a_1}, \X^{(i)}_{a_2}, \dots, \X^{(i)}_{a_t}$. 
We combined these copies into a dataset, denoted as $\mathcal{X}^{(i)}$. 
For every sample $\x$ in $\mathcal{X}^{(i)}$, 
we retrieve its $t$ nearest neighbors of other samples in $\mathcal{X}^{(i)}$, 
and calculate the overlap of the retrieved samples 
with the transformed copies of $\x$ by $\ma$, 
\textit{i.e.}, 
$\{a_1(\x), a_2(\x), \dots, a_t(\x)\}$. 
The calculated overlap score will be in $[0, 1]$ in general,
but since the identity map is usually in $\ma$, 
this score will usually be in $[1/t, 1]$. 

During the retrieval of nearest neighbors, 
we consider the distance function of the two samples (namely $\x$ and $\x'$)
as 
$$d(f(\x;\wt),f(\x';\wt))$$,
where $\wt$ is the model we are interested in examining. 
In the empirical study later, 
we consider $d(\mathbf{u},\mathbf{v})=\Vert\mathbf{u}-\mathbf{v}\Vert_1$, with $\mathbf{u}$ and $\mathbf{v}$ denoting two vectors.
If we use other distance functions, 
the reported values may differ, but we notice that
the rank of the methods compared in terms of 
this test barely changes. 

Finally, we iterate through label $i$ and report the averaged score for all the labels as the final score. 
A higher score 
indicates the model $\wt$ is more invariant to 
the transformation functions in $\ma$. 

This invariance test procedure is formally presented in Algorithm~\ref{alg:invariance} below. 

\begin{algorithm}
\scriptsize 
\SetAlgoLined
\KwResult{$\widehat{I}(\wt)$}
\textbf{Input:} a family of transformation functions $\ma = \{a_1(), a_2(), \dots, a_t()\}$, 
a hold-out dataset $(\X,\Y)$, the model of interest $\wt$, and a distance metric $d()$\;
 \For {every label $i$ }{
    identify all the samples from $(\X,\Y)$ with label $i$, name this set of samples $\X^{(i)}$\;
    \For {every $a() \in \ma$}{
        generate $\X_a^{(i)}$ by applying $a()$ to every $\x \in \X^{(i)}$\;
    }
    $\mathcal{X}^{(i)} = \X^{(i)}_{a_1} \cup \X^{(i)}_{a_2} \cup, \dots, \cup \X^{(i)}_{a_t}$\;
    \For {every $\x \in \X^{(i)}$}{
        generate set $T = \{a_1(\x), a_2(\x), \dots, a_t(\x)\}$\;
        \For {every $\x' \in \mathcal{X}^{(i)}$}{
            calculate the distance between $\x$ and $\x'$ with $d(f(\x;\wt),f(\x';\wt))$\;
        }
        retrieve the $t$ nearest neighbors of $\x$ out of $\mathcal{X}^{(i)}$, and name this set of samples $K$\;
        calculate the score for $\x$ with $\vert T \cap K\vert/\vert T\vert$, where $\vert T\vert$ denotes the cardinality of the set $T$\;
    }
    calculate the score for label $i$ as the average score across all $\x \in \X^{(i)}$\;
 }
 calculate the final score $\widehat{I}(\wt)$ as the average score across all the labels\;
 \caption{Invariance test}
 \label{alg:invariance}
\end{algorithm} 


\section{Empirical Study}
\label{sec:motivate}
In this section, 
we conduct experiments to 
study the relationship between robustness and invariance, 
as well as how training with AR
can help improve the invariance score. 
In short, our empirical study in this section
will lead us to the following three major conclusions:
\begin{itemize}
    \item High robust accuracy does not necessarily mean a high invariance score and vice versa. 
    \item AR can help improve the invariance score. 
    \item Squared \ltwo{} norm over logits is considered the empirically most favorable AR option for learning robust and invariant representations. 
\end{itemize}

\begin{table*}[t]
\small 
\centering 
\caption{Test results on MNIST dataset for different ARs over three evaluation metrics and three distribution shifts. 
\textsf{B} denotes Baseline, i.e., the model does not use any data augmentation; 
\textsf{V} denotes vanilla augmentation, i.e., the model uses data augmentation but not AR; 
\textsf{L} denotes $\ell_1$ norm;
\textsf{S} denotes squared $\ell_2$ norm;
\textsf{C} denotes cosine similarity;
\textsf{K} denotes KL divergence;
\textsf{W} denotes Wasserstein-1 metric;
\textsf{D} denotes GAN discriminator.
}
\setlength\tabcolsep{5pt}
\begin{tabular}{cccccccccc}
\hline
 &  & \textsf{B} & \textsf{V} & \textsf{L} & \textsf{S} & \textsf{C} & \textsf{K} & \textsf{W} & \textsf{D} \\ \hline
\multirow{3}{*}{Texture} & \textsf{Accuracy} & $99.2_{\pm0.0}$ & $99.2_{\pm0.0}$ & $99.0_{\pm0.1}$ & $99.1_{\pm0.0}$ & \bm{$99.4_{\pm0.0}$} & $68.7_{\pm41}$ & $98.7_{\pm0.1}$ & $99.1_{\pm0.1}$ \\
 & \textsf{Robustness} & $98.3_{\pm0.3}$ & $99.0_{\pm0.0}$ & $99.0_{\pm0.0}$ & $99.0_{\pm0.0}$ & \bm{$99.1_{\pm0.0}$} & $68.7_{\pm41}$ & $98.4_{\pm0.1}$ & $98.8_{\pm0.1}$ \\
 & \textsf{Invariance} & $92.4_{\pm0.0}$ & $99.2_{\pm0.0}$ & \bm{$100_{\pm0.0}$} & \bm{$100_{\pm0.0}$} & $99.0_{\pm0.0}$ & $76.0_{\pm34}$ & $60.7_{\pm2.9}$ & $35.0_{\pm6.7}$ \\ \hline
\multirow{3}{*}{Rotation} & \textsf{Accuracy} & $99.2_{\pm0.0}$ & $99.0_{\pm0.1}$ & \bm{$99.3_{\pm0.0}$} & \bm{$99.3_{\pm0.0}$} & $99.0_{\pm0.0}$ & $98.8_{\pm0.0}$ & $98.5_{\pm0.4}$ & $98.9_{\pm0.0}$ \\
 & \textsf{Robustness} & $28.9_{\pm0.6}$ & $93.6_{\pm0.3}$ & \bm{$95.2_{\pm0.1}$} & $95.1_{\pm0.1}$ & $93.5_{\pm0.1}$ & $94.5_{\pm0.2}$ & $92.3_{\pm0.8}$ & $93.2_{\pm0.7}$ \\
 & \textsf{Invariance} & $20.6_{\pm0.4}$ & $58.3_{\pm2.2}$ & $66.0_{\pm3.8}$ & $65.4_{\pm3.5}$ & $29.1_{\pm0.6}$ & \bm{$71.9_{\pm2.8}$} & $48.7_{\pm1.9}$ & $39.3_{\pm6.9}$ \\ \hline
\multirow{3}{*}{Contrast} & \textsf{Accuracy} & $99.2_{\pm0.0}$ & $98.9_{\pm0.3}$ & \bm{$99.4_{\pm0.0}$} & \bm{$99.4_{\pm0.0}$} & $99.2_{\pm0.0}$ & $98.9_{\pm0.0}$ & $98.7_{\pm0.0}$ & $99.1_{\pm0.0}$ \\
 & \textsf{Robustness} & $26.0_{\pm1.0}$ & $95.4_{\pm2.6}$ & $96.8_{\pm0.8}$ & $97.4_{\pm0.6}$ & \bm{$97.9_{\pm0.4}$} & $88.4_{\pm4.5}$ & $87.2_{\pm9.6}$ & $97.7_{\pm0.6}$ \\
 & \textsf{Invariance} & $20.7_{\pm1.1}$ & $37.5_{\pm6.9}$ & \bm{$41.4_{\pm0.3}$} & $41.3_{\pm0.4}$ & $26.3_{\pm1.1}$ & $40.3_{\pm0.9}$ & $28.4_{\pm1.7}$ & $20.0_{\pm0.1}$ \\ \hline
\end{tabular}
\label{tab:mnist:consistency}
\end{table*}

\begin{table*}[t]
\small 
\centering 
\caption{Test results on CIFAR10 dataset for different ARs over three evaluation metrics and three distribution shifts. 
Notations are the same as in Table~\ref{tab:mnist:consistency}.
}
\setlength\tabcolsep{5pt}
\begin{tabular}{cccccccccc}
\hline
 &  & \textsf{B} & \textsf{V} & \textsf{L} & \textsf{S} & \textsf{C} & \textsf{K} & \textsf{W} & \textsf{D} \\ \hline
\multirow{3}{*}{Texture} & \textsf{Accuracy} & \bm{$88.5_{\pm1.7}$} & $86.3_{\pm0.3}$ & $82.8_{\pm0.4}$ & $82.0_{\pm0.0}$ & $86.8_{\pm0.1}$ & $84.6_{\pm0.4}$ & $86.8_{\pm0.1}$ & $86.5_{\pm0.4}$ \\
 & \textsf{Robustness} & $38.3_{\pm0.7}$ & $76.5_{\pm0.0}$ & $79.1_{\pm0.2}$ & \bm{$79.4_{\pm0.1}$} & $76.8_{\pm0.1}$ & $75.6_{\pm0.1}$ & $76.8_{\pm0.1}$ & $77.3_{\pm0.2}$ \\
 & \textsf{Invariance} & $44.7_{\pm0.5}$ & $94.1_{\pm0.6}$ & \bm{$100_{\pm0.0}$} & \bm{$100_{\pm0.0}$} & $94.2_{\pm1.2}$ & $96.7_{\pm1.0}$ & $93.4_{\pm0.4}$ & $95.4_{\pm0.8}$ \\ \hline
\multirow{3}{*}{Rotation} & \textsf{Accuracy} & \bm{$88.5_{\pm1.7}$} & $81.1_{\pm2.3}$ & $80.3_{\pm4.7}$ & $78.1_{\pm2.2}$ & $80.7_{\pm1.8}$ & $80.4_{\pm6.8}$ & $87.5_{\pm1.6}$ & $83.7_{\pm4.0}$ \\
 & \textsf{Robustness} & $15.3_{\pm1.1}$ & $49.0_{\pm0.4}$ & $50.7_{\pm2.6}$ & \bm{$51.1_{\pm1.0}$} & $47.0_{\pm0.6}$ & $46.6_{\pm4.5}$ & $49.4_{\pm0.9}$ & $47.7_{\pm1.7}$ \\
 & \textsf{Invariance} & $47.3_{\pm0.6}$ & $54.6_{\pm2.0}$ & $40.7_{\pm1.4}$ & $55.2_{\pm1.6}$ & \bm{$55.7_{\pm1.1}$} & $55.5_{\pm1.0}$ & \bm{$55.7_{\pm1.3}$} & $54.5_{\pm0.7}$ \\ \hline
\multirow{3}{*}{Contrast} & \textsf{Accuracy} & $88.5_{\pm1.7}$ & $85.2_{\pm4.5}$ & $88.8_{\pm1.3}$ & $86.9_{\pm2.0}$ & $89.6_{\pm0.9}$ & $83.4_{\pm3.7}$ & $87.2_{\pm2.2}$ & \bm{$89.7_{\pm0.9}$} \\
 & \textsf{Robustness} & $54.5_{\pm0.8}$ & $77.7_{\pm0.8}$ & $83.1_{\pm1.3}$ & \bm{$83.4_{\pm1.1}$} & $80.7_{\pm2.4}$ & $79.5_{\pm2.9}$ & $82.8_{\pm1.0}$ & $80.8_{\pm5.6}$ \\
 & \textsf{Invariance} & $53.1_{\pm1.6}$ & $67.5_{\pm0.1}$ & $69.8_{\pm4.8}$ & \bm{$71.3_{\pm2.3}$} & $53.6_{\pm4.7}$ & $73.0_{\pm2.2}$ & $74.6_{\pm0.8}$ & $66.6_{\pm4.3}$ \\ \hline
\end{tabular}
\label{tab:cifar10:consistency}
\end{table*}

\subsection{Experiment Setup}
\label{sec:exp:sync}

Our empirical investigation is conducted over 
two benchmark datasets (MNIST dataset with LeNet architecture and CIFAR10 dataset with ResNet18 architecture) and 
three sets of the transformations.


\paragraph{Transformation Functions}
We consider three sets of transformation functions: 
\begin{itemize}
    \item \textbf{Texture}: we use Fourier transform to perturb the texture of the data by discarding the high-frequency components cut-off by a radius $r$, following \citep{wang2020high}. The smaller $r$ is, the fewer high-frequency components the image has. We consider $$\ma=\{a_0(), a_{12}(), a_{10}(), a_{8}(), a_{6}()\}$$, where the subscript denotes the radius $r$ except that $a_0()$ is the identity map. 
    We consider $\ma$ during test time, but only $a_0()$ and $a_6()$ during training. 
    \item \textbf{Rotation}: we rotate the images clockwise $r$ degrees with $\ma=\{a_0(), a_{15}(), a_{30}(), $ $a_{45}(), a_{60}()\}$, where the subscript denotes the degree of rotation and $a_0()$ is the identity map. 
    We consider $\ma$ during test time, but only $a()_0$ and $a_{60}()$ during training. 
    \item \textbf{Contrast}: we create the images depicting the same visual information, but with different scales of the pixels, including the negative color representation. 
    Therefore, we have $\ma = \{a_0(\x)=\x, a_1(\x) = \x/2, a_2(\x) = \x/4, a_3(\x) = 1-\x, a_4(\x) = (1-\x)/2, a_5(\x) = (1-\x)/4$, where $\x$ stands for the image whose pixel values have been set to be between 0 and 1. 
    We consider $\ma$ during test time, but only $a_0()$ and $a_3()$ during training.
\end{itemize}

\paragraph{Alignment Regularizations}
We consider the following popular choices of AR (with $\mathbf{u}$ and $\mathbf{v}$ denoting two vector embeddings): 
\begin{itemize}
    \item [\textsf{L}:] $\ell_1$ norm of the vector differences, \textit{i.e.}, $\Vert \mathbf{u} - \mathbf{v} \Vert_1$
    \item [\textsf{S}:] squared $\ell_2$ norm of the vector differences, \textit{i.e.}, $\Vert \mathbf{u} - \mathbf{v} \Vert_2^2$
    \item [\textsf{C}:] cosine similarity, \textit{i.e.}, $\mathbf{u}^T\mathbf{v}/\Vert\mathbf{u}\Vert\cdot\Vert\mathbf{v}\Vert$
    \item [\textsf{K}:] KL divergence over a batch of paired embeddings; the second argument are augmented samples. 
    \item [\textsf{W}:] Wasserstein-1 metric over a batch of paired embeddings, with implementation following Wasserstein GAN \citep{arjovsky2017wasserstein,gulrajani2017improved}
    \item [\textsf{D}:] a vanilla GAN discriminator over a batch of paired embeddings, the one-layer discriminator is trained to classify samples vs. augmented samples. 
\end{itemize}
We mainly discuss applying the AR to logits (embeddings prior to the final softmax function). We have also experimented with applying to the final softmax output and the embeddings one layer prior to logits. Both cases lead to substantially worse results, so we skip the discussion. 

\paragraph{Hyperparameters} 
We use standard data splits. 
We first train the baseline models to get reasonably high performances
(for MNIST, we train 100 epochs with learning rate set to be $10^{-4}$; 
for CIFAR10, we train 200 epochs with learning rate initialized to be $10^{-1}$ 
and reduced one magnitude every 50 epochs; 
batch sizes in both cases are set to be 128).  
Then we train other augmented models with the same learning rate and batch size \textit{etc}. 
The regularization weight is searched with 8 choices evenly split in the logspace from $10^{-7}$ to $1$. 
For each method, the reported score is from the weight resulting in
the highest robust accuracy. 
We test with three random seeds. 

\paragraph{Evaluation Metrics:} We consider the three major evaluation metrics as we discussed in Section~\ref{sec:background}: 
\begin{itemize}
    \item \textbf{Accuracy:} test accuracy on the original test data. 
    \item \textbf{Robustness:} the worst-case accuracy when each sample can be transformed with $a \in \ma$.  
    \item \textbf{Invariance:} the metric to test whether the learned representations are invariant to the transformation functions, as introduced in Section~\ref{sec:background:invariance}. 
\end{itemize}

\subsection{Results and Discussion}

Tables~\ref{tab:mnist:consistency} and~\ref{tab:cifar10:consistency} show the empirical results across the three distribution shifts and the three evaluation metrics. 
First of all, 
no method can dominate across all these 
evaluations, 
probably because of the tradeoff
between accuracy and robustness 
\citep{tsipras2018robustness,ZhangYJXGJ19,wang2020high}. 
Similarly, the tradeoff 
between accuracy and invariance
can be expected from the role of the regularization weight: 
when the weight is small, 
AR has no effect, 
and the model is primarily optimized for improving accuracy;
when the weight is considerable, 
the model is pushed toward a trivial solution 
that maps every sample to the same embedding, 
ignoring other patterns of the data. 
This is also the reason that results in 
Tables~\ref{tab:mnist:consistency} and~\ref{tab:cifar10:consistency}
are selected according to the robust accuracy. 

Due to the tradeoffs, it may not be strategic 
if we only focus on the highest number of each row. 
Instead, we suggest studying the three rows 
of each test case together and compare the tradeoffs, 
which is also a reason we reformulate the 
invariance test in Section~\ref{sec:background:invariance} 
so that we can have
bounded invariance scores directly comparable to 
accuracy and robust accuracy.

For example, 
in the texture rows of Table~\ref{tab:mnist:consistency}, 
while cosine similarity can outperform 
squared \ltwo{} norm in accuracy and robustness 
with 0.3 and 0.1 margins, respectively, 
it is disadvantageous in invariance with a larger margin (1.0).
Similarly, for rotation rows of Table~\ref{tab:mnist:consistency}, 
KL-divergence shows the overall highest scores, followed by $\ell_1$ norm and squared $\ell_2$ norm. 
For contrast rows, 
both $\ell_1$ norm and squared $\ell_2$ norm stand out. 
Overall, experiments in MNIST suggest the desired 
choice to be $\ell_1$ norm or squared $\ell_2$ norm, 
and we believe squared $\ell_2$ norm is marginally better. 

On the other hand, the experiments in CIFAR10 in Table~\ref{tab:cifar10:consistency}
mostly favor $\ell_1$ norm and squared $\ell_2$ norm. 
Good performances can also be observed from 
Wasserstein-1 metric for rotation. 
Also, we notice that squared $\ell_2$ norm, in general, outperforms 
$\ell_1$ norm. 

Thus, our empirical study recommends 
squared $\ell_2$ norm for AR to learn robust and invariant models, 
with $\ell_1$ norm as a runner-up.



\section{Analytical Support}
\label{sec:method}

According to our discussions in Section~\ref{sec:background:invariance}, 
Wasserstein metric is supposed to be a favorable option 
as AR. 
However, its empirical performance does not 
stand out.
We believe this disparity is mainly due to 
the difficulty in calculating the Wasserstein metric
in practice. 

On the other hand, 
norm-based consistency regularizations stand out. 
We are interested in studying the properties 
of these metrics. 
In particular, this section aims to complement the 
empirical study by showing that
norm-based AR can lead to bounded robust generalization error
under certain assumptions.  

Also, our experiments only use two transformation functions 
from $\ma$ during training but are tested with all the functions in $\ma$. 
We also discuss the properties of these two functions 
and argue that 
training with only these two functions can be a good strategy 
when certain assumptions are met. 


\subsection{Overview of Analytical Results}

All together, we need six assumptions (namely, A1-A6). 
Out of these assumptions, 
A1 is necessary for using data augmentation, 
and A4 is necessary for deriving machine learning generalization error bound. 
A2 and A3 are properties of data transformation functions, 
and A5 and A6 are technical assumptions. 
Along with introducing these assumptions, we will also offer empirical evidence showing that these assumptions are likely to hold in practice.

In particular, 
we will show that
\begin{itemize}
    \item With A2 holds, $\ell_1$ norm can replace the empirical Wasserstein metric to regularize invariance (Proposition~\ref{theory:proposition:l1}) and then we can derive a bounded robust error if all the functions in $\ma$ are available (Theorem~\ref{theory:theorem:main}). 
    \item With the above result and A3, we can derive a bounded robust error if only two special functions in $\ma$ (which we refer to as \emph{vertices}) are available (Lemma~\ref{theory:lemma:rv}).
\end{itemize}

\subsection{Assumptions Setup}

\label{sec:app:assumption}

\subsubsection{Assumptions on Data Augmentation Functions}

Our first three assumptions are for the basic properties of the data transformation functions used. 
These properties are formally introduced as assumptions below. 

\begin{itemize}
\item [\textbf{A1}:] \textbf{Dependence-preservation:}
    \textit{
    the transformation function will not alter the dependency regarding the label (\textit{i.e.}, for any $a()\in \ma$, $a(\x)$ will have the same label as $\x$)
    or the features (\textit{i.e.}, $a_1(\x_1)$ and $a_2(\x_2)$ are independent if $\x_1$ and $\x_2$ are independent).}
\end{itemize}
Intuitively, ``Dependence-preservation'' has two perspectives:  
Label-wise, the transformation cannot alter the label of the data, 
which is a central requirement of almost all the data augmentation functions in practice. 
Feature-wise, the transformation will not introduce new dependencies between the samples. 
            
We consider the label-wise half of this argument as a fundamental property of any data augmentations. It has to be always true for data augmentation to be a useful technique. On the other hand, the feature-wise half of this argument is a fundamental property required to derive the generalization error bounds. Intuitively, we believe this property holds for most data augmentation techniques in practice. 

\begin{itemize}
\item [\textbf{A2}:] \textbf{Efficiency:} \textit{ for $\wt$ and 
    any $a()\in \ma$, $\faxt$ is closer to $\x$ than any other samples under a distance metric 
    $d_e(\cdot, \cdot)$, \textit{i.e.}, 
    \begin{align*}
        d_e(\faxt, \fxt)\leq\min_{\x'\in\X_{-\x}}d_e(\faxt, f(\x';\wt)).
    \end{align*}
    We define $d_e(\cdot, \cdot)$ to be $\ell_1$ norm. }
\end{itemize}

Intuitively, the efficiency property means the augmentation should only generate new samples of the same label as minor perturbations of the original one.
If a transformation violates this property,
there should exist other simpler transformations
that can generate the same target sample. 

\begin{itemize}
\item [\textbf{A3}:] \textbf{Vertices:} \textit{
    For a model $\wt$ and a transformation $a()$, we use $\mP_{a, \wt}$ to denote the distribution of $\faxt$ for $(\x, \y)\sim \mP$. 
    ``Vertices'' argues that exists two extreme elements in $\ma$, namely $a^+$ and $a^-$, with certain metric $d_x(\cdot, \cdot)$, we have
    \begin{align*}
        d_x (\mP_{a^+, \wt}, \mP_{a^-, \wt}) = \sup_{a_1, a_2\in \ma} d_x (\mP_{a_1, \wt}, \mP_{a_2, \wt})
    \end{align*}
    We define $d_x(\cdot, \cdot)$ to be Wasserstein-1 metric. }
\end{itemize}
        
Intuitively, ``Vertices'' suggests that there are extreme cases of the transformations. 
For example, if one needs the model to be invariant to rotations from $0^{\circ}$ to $60^{\circ}$,
we consider the vertices to be $0^{\circ}$ rotation function (thus identity map) and $60^{\circ}$ rotation function. 
In practice, one usually selects the transformation vertices with intuitions or domain knowledge. 
    
Notice that we do not need to argue that \textbf{A3} always holds. 
All we need is that \textbf{A3} can sometimes hold, 
and when it holds, we can directly train with the regularized vertex augmentation. Thus, anytime \ra{} empirically performs well is a favorable argument for A3. 
To show that \ra{} can sometimes perform well, 
we compare the \ra{} with vanilla (non-regularized) worst-case data augmentation (\vwa{}) method across our synthetic experiment setup. 
We notice that 
out of six total scenarios (\{texture, rotation, contrast\} $\times$ \{MNIST, CIFAR10\}), \ra{} outperforms \vwa{} frequently. This suggests that the domain-knowledge of vertices can help in most cases, although not guaranteed in every case.

\subsubsection{Assumptions on Background and Generalization Error Bound}

In an abstract manner, when the test data and train data are from the same distribution, 
several previous analyses on the generalization error can be sketched as (see examples in \textbf{A4}):
\begin{align}
    \rp(\wt) \leq \wrp(\wt) + \phi(|\Theta|, n, \delta)
    \label{eq:standard}
\end{align}
which suggests that the expected risk $\rp(\wt)$ can be bounded by the empirical risk $\wrp(\wt)$
and a function of hypothesis space $|\Theta|$ and number of samples $n$; 
$\delta$ accounts for the probability when the bound holds. 
$\phi()$ is a function of these three terms. 
Dependent on the details of different analyses, 
different concrete examples of this generic term will need different assumptions. 
We use a generic assumption \textbf{A4} to denote 
the assumptions required for each example. 

Following our main goal to study 
how alignment regularization and data augmentation
help in 
accuracy, robustness, and invariance, 
our strategy in theoretical analysis 
is to derive error bounds for accuracy and robustness,
and the error bound directly contains terms to 
regularize the invariance. 
Further, as robustness naturally bounds accuracy
(\textit{i.e.}, $\rp(\wt)\leq\rpa(\wt)$ 
following the definitions in \eqref{eq:accuracy} and \eqref{eq:robustness} respectively), 
we only need to study the robust error. 

To study the robust error, we need two additional technical assumptions. 
\textbf{A5} connects the distribution of expected robust risk 
and the distribution of empirical robust risk, 
and \textbf{A6} connects the 0-1 classification error and cross-entropy error. 

\begin{itemize}
\item [\textbf{A4}:] \textit{We list two examples here: 
\begin{itemize}
    \item when \textbf{A4} is ``$\Theta$ is finite, $l(\cdot, \cdot)$ is a zero-one loss, samples are \textit{i.i.d}'',  $\phi(|\Theta|, n, \delta)=\sqrt{(\log(|\Theta|) + \log(1/\delta))/2n}$
    \item when \textbf{A4} is ``samples are \textit{i.i.d}'', $\phi(|\Theta|, n, \delta) = 2\mathcal{R}(\mathcal{L}) + \sqrt{(\log{1/\delta})/2n}$, where $\mathcal{R}(\mathcal{L})$ stands for Rademacher complexity and $\mathcal{L} = \{l_\theta \,|\, \theta \in \Theta \}$, where $l_\theta$ is the loss function corresponding to $\theta$. 
\end{itemize} }
\end{itemize}
For more information or more concrete examples of the generic term, 
one can refer to relevant textbooks such as \citep{bousquet2003introduction}. 

A4 stands for the fundamental assumptions used to derive standard generalization bounds. We rely on this assumption as how previous theoretical works rely on them. 

\begin{itemize}
\item[\textbf{A5}:] \textit{the distribution for expected robust risk 
equals the distribution for 
empirical robust risk, \textit{i.e.}, 
\begin{align*}
    \argmax_{\mathcal{P}'\in T(\mP, \ma)} \rpp(\wt) 
    = \argmax_{\mathcal{P'}\in T(\mP, \ma)}\wrpp(\wt) 
\end{align*}
where $T(\mP, \ma)$ is the collection of distributions created by elements in $\ma$ over samples from $\mP$. }
\end{itemize}
Eq. \eqref{eq:robustness} can be written equivalently into the expected risk 
over a pseudo distribution $\mP'$ (see Lemma 1 in \citep{tu2019theoretical}), 
which is the distribution that can sample the data leading to the expected robust risk. 
Thus, equivalently, we can consider 
$\sup_{\mathcal{P}'\in T(\mP, \ma)} \rpp(\wt)$ as a surrogate of $\rpa(\wt)$, where $T(\mP, \ma)$ denotes the set of possible resulting distributions. 
Following the empirical strength of techniques such as adversarial training \citep{MadryMSTV18}, 
we introduce an assumption relating the distribution of expected robust risk 
and the distribution of empirical robust risk 
(namely, \textbf{A5}, in  Appendix~\ref{sec:app:assumption}).
Thus, 
the bound of our interest (\textit{i.e.}, $\sup_{\mP'\in T(\mP, \ma)}\rpp(\wt)$) 
can be analogously analyzed through $\sup_{\mP'\in T(\mP, \ma)}\wrpp(\wt)$. 
    
\textbf{A5} is likely to hold in practice: Assumption \textbf{A5} appears very strong, 
however, the successes of methods like adversarial training \citep{MadryMSTV18} 
suggest that, in practice, 
\textbf{A5} might be much weaker 
than it appears. 

\begin{itemize}
    \item [\textbf{A6}:] \textit{ With $(\x,\y) \in (\X,\Y)$, the sample maximizing cross-entropy loss and the sample maximizing classification error for model $\wt$ follows: 
    \begin{align}
        \forall \x, \quad \dfrac{\y^\top \fxt}{\inf_{a\in \ma}\y^\top \faxt} \geq \exp\big(\mathbb{I}(g(\fxt)\neq g(f(\x';\wt)))\big)
        \label{eq:a2}
    \end{align}
    where $\x'$ stands for the sample maximizing classification error, \textit{i.e.}, 
    \begin{align*}
        \x' = \argmin_\x \y^\top g(\fxt)
    \end{align*}
    Also, 
    \begin{align}
        \forall \x, \quad \vert \inf_{a\in \ma}\y^\top \faxt \vert \geq 1
    \label{eq:assum:lipschitz}
    \end{align} }
  \end{itemize}  
Intuitively, although Assumption \textbf{A6} appears complicated, it describes the situations of two scenarios: 

If $g(\fxt)=g(f(\x';\wt))$, which means either the sample is misclassified by $\wt$ or $\ma$ is not rich enough for a transformation function to alter the prediction, the RHS of Eq.~\ref{eq:a2} is 1, thus Eq.~\ref{eq:a2} always holds (because $\ma$ has the identity map as one of its elements). 

If $g(\fxt)\neq g(f(\x';\wt))$, which means a transformation alters the prediction. In this case, \textbf{A6} intuitively states that the $\ma$ is reasonably rich and the transformation is reasonably powerful to create a gap of the probability for the correct class between the original sample and the transformed sample. The ratio is described as the ratio of the prediction confidence from the original sample over the prediction confidence from the transformed sample is greater than $e$.
        

\subsection{Analytical Support}

\paragraph{Regularized Worst-case Augmentation}

To have a model with a small invariance score, we should probably directly regularize the empirical counterpart of Eq.~\eqref{eq:invariance}. 
However, Wasserstein distance is difficult to calculate 
in practice. 
Fortunately, 
Proposition~\ref{theory:proposition:l1}
conveniently allows us to use $\ell_1$ norm to replace Wasserstein metric. 
With Proposition~\ref{theory:proposition:l1}, now
we can offer our main technical result to study the robust error $\rpa{\wt}$ (as defined in Eq.~\eqref{eq:robustness}).
\begin{theorem}
With Assumptions A1, A2, A4, A5, and A6, with probability at least $1-\delta$, 
we have
\begin{align*}
    \rpa{\wt}  \leq
    \wrp (\wt) + 
    \sum_{i}||f(\x_i;\wt) - f(\x'_i;\wt)||_1 + \phi(|\Theta|, n, \delta)
\end{align*}
and 
$\x' = a(\x) $, where $ a = \argmin_{a \in \ma} \y^\top \faxt$.
$\phi(|\Theta|, n, \delta)$ is defined in A4. 
\label{theory:theorem:main}
\end{theorem}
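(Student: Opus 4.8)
The plan is to prove the bound in two stages, using that robustness dominates accuracy ($\rp(\wt)\le\rpa(\wt)$) so that it suffices to control $\rpa(\wt)$. In the first stage I would pass from the \emph{expected} robust error to the \emph{empirical} robust error by a uniform-convergence argument transported to the augmented distribution; in the second stage I would bound the empirical robust error by the clean empirical risk plus the $\ell_1$ alignment term, which is where Assumptions~A6 and A2 enter.

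\emph{Stage 1 (expected robust error $\to$ empirical robust error).} First I would rewrite the robust error as an expected risk over a worst-case pseudo-distribution: by Lemma~1 of \citep{tu2019theoretical},
\begin{align*}
    \rpa(\wt)=\sup_{\mP'\in T(\mP,\ma)}\rpp(\wt).
\end{align*}
The feature-wise half of A1 guarantees that transformed copies of independent samples stay independent, so samples drawn from any $\mP'\in T(\mP,\ma)$ remain i.i.d.\ and A4 applies unchanged: with probability at least $1-\delta$, $\rpp(\wt)\le\wrpp(\wt)+\phi(|\Theta|,n,\delta)$. Finally I would invoke A5, which identifies $\argmax_{\mP'}\rpp(\wt)$ with $\argmax_{\mP'}\wrpp(\wt)$, so evaluating the previous inequality at that common distribution gives $\rpa(\wt)\le\wrpa(\wt)+\phi(|\Theta|,n,\delta)$.

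\emph{Stage 2 (empirical robust error $\to$ clean risk $+$ alignment).} Write $\x'_i=a^{\star}_i(\x_i)$ with $a^{\star}_i=\argmin_{a\in\ma}\y_i^\top\faxit$ (the cross-entropy worst case, matching the $\x'$ in the statement). I would establish the per-sample inequality
\begin{align*}
    \max_{a\in\ma}\mI[g(\faxit)\neq\y_i]\ \le\ \mI[g(\fxit)\neq\y_i]+\Vert\fxit-f(\x'_i;\wt)\Vert_1
\end{align*}
by a three-way case split. If $\x_i$ is already misclassified, the first term on the right is $1$ and dominates the left side; if $\x_i$ is correctly classified and no transformation in $\ma$ changes the prediction, the left side is $0$. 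In the remaining case the prediction is correct but some transform flips it, so the indicator appearing in \eqref{eq:a2} equals $1$; then A6 forces the multiplicative gap $\y_i^\top\fxit/\inf_{a\in\ma}\y_i^\top\faxit\ge e$, which together with \eqref{eq:assum:lipschitz} becomes an additive gap $\y_i^\top\fxit-\y_i^\top f(\x'_i;\wt)\ge e-1>1$, and since $\y_i$ is one-hot this gap is at most $\Vert\fxit-f(\x'_i;\wt)\Vert_1$, which therefore exceeds $1\ge$ the left side. Here the label-wise half of A1 is used to know $\x'_i$ still carries label $\y_i$ (so $\y_i^\top f(\x'_i;\wt)$ is the relevant true-class confidence), and A2 is what justifies aligning $f(\x'_i;\wt)$ to the original $\fxit$ rather than to some other training representation. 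Averaging over $i$ gives $\wrpa(\wt)\le\wrp(\wt)+\sum_i\Vert\fxit-f(\x'_i;\wt)\Vert_1$ (up to the harmless $1/n$ on the regularizer), and chaining with Stage~1 yields the theorem. The $\ell_1$ norm that appears is exactly the one Proposition~\ref{theory:proposition:l1} certifies as a surrogate for the Wasserstein invariance term, though that proposition is not load-bearing in the bound itself.

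The step I expect to be the main obstacle is Stage~1: A4 is a guarantee for a \emph{fixed} hypothesis class and a \emph{fixed} sampling distribution, whereas the relevant $\mP'$ is chosen in a data-dependent way, so a direct application would demand a union bound over the (generally infinite) family $T(\mP,\ma)$. A5 is precisely the device that avoids this, by collapsing the population and empirical maximizers; making that collapse legitimate — and checking that the Stage~2 case analysis is genuinely exhaustive, with the one-hot structure of $\y$ supplying the link between the $\ell_1$ logit distance and the true-class confidence gap — is where the real care is needed, the rest being bookkeeping.
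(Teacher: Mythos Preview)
Your Stage~1 coincides with the paper's auxiliary lemma (Lemma~A.1 in the appendix), which uses A1, A4, and A5 to pass from $\rpa(\wt)$ to the empirical worst-case risk plus $\phi$. Stage~2, however, follows a genuinely different route. The paper does not argue by a per-sample case split; instead it writes $\sup_{\mP'}\wrpp(\wt)-\wrp(\wt)$ as a difference of $\y^\top g(f(\cdot))$ terms, uses A6 (Eq.~\eqref{eq:a2}) to replace $g(\cdot)$ by $\log(\cdot)$ and obtain a difference of cross-entropy losses, and then invokes the Kantorovich--Rubinstein dual representation of $W_1$ (with the Lipschitz constant supplied by Eq.~\eqref{eq:assum:lipschitz}) to bound that gap by $W_1(\widehat{\mathbf{Q}}_{\x,\wt},\widehat{\mathbf{Q}}_{a(\x),\wt})$. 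Proposition~\ref{theory:proposition:l1}---which is precisely where A2 enters---is then load-bearing, contrary to your remark: it is what converts the empirical Wasserstein distance into $\sum_i\Vert\fxit-f(\x'_i;\wt)\Vert_1$. Your argument bypasses the Wasserstein step entirely, extracting the additive confidence gap $>1$ directly from the two halves of A6 and bounding it by a single coordinate of the softmax difference; in your route A2 is never actually used (your stated invocation of it is vestigial). The paper's path buys an explicit link between the bound and the Wasserstein invariance measure~\eqref{eq:invariance}, which is thematically central to the paper; your path is more elementary and, as written, delivers the theorem under one fewer assumption.
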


This technical result immediately inspires the method to guarantee worst case performance, 
as well as to explicitly enforce the concept of invariance. 
The method 
$a = \argmin_{a \in \ma} \y^\top \faxt$ is selecting the transformation function 
maximizing the cross-entropy loss 
(notice the sign difference between here and the cross-entropy loss), 
which we refer to as worst-case data augmentation.
This method is also closely connected to adversarial training \citep[\textit{e.g.},][]{MadryMSTV18}. 

\paragraph{Regularized Vertex Augmentation}

As $\ma$ in practice usually has a large number of
(and possibly infinite) elements, we may not always be able to identify 
the worst-case transformation function with reasonable computational efforts. 
We further leverage the vertex property (boundary cases of transformation functions, discussed as Assumption A3 in the appendix) of the transformation function 
to bound the worst-case generalization error:
\begin{lemma}
With Assumptions A1-A6, 
assuming there is a $a'()\in \ma$ where $ \widehat{r}_{\mP_{a'}}(\wt)=\frac{1}{2}\big(\widehat{r}_{\mP_{a^+}}(\wt)+\widehat{r}_{\mP_{a^-}}(\wt)\big)$,
with probability at least $1-\delta$, we have:
\begin{align*}
    \rpa(\wt) \leq &
    \dfrac{1}{2}\big(\widehat{r}_{\mP_{a^+}}(\wt) + \widehat{r}_{\mP_{a^-}}(\wt)\big) \\
    &+ \sum_{i}||f(a^+(\x_i);\wt) - f(a^-(\x');\wt)||_1 
    + \phi(|\Theta|, n, \delta), 
\end{align*}
where $a^+()$ and $a^-()$ are defined in A3, 
and $\phi(|\Theta|, n, \delta)$ in A4.
\label{theory:lemma:rv}
\end{lemma}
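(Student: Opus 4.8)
The plan is to re-run the argument behind Theorem~\ref{theory:theorem:main}, but with the ``midpoint'' transformation $a'()$ playing the role that the original (identity) sample plays there, and then to absorb the resulting alignment term into the alignment between the two vertices $a^+()$ and $a^-()$ by invoking Assumption A3 together with Proposition~\ref{theory:proposition:l1}. The ingredients are thus: A5 (with the generic bound A4) to pass from expected robust risk to empirical robust risk; A6 to convert $0$--$1$ disagreements into $\ell_1$ gaps of the logits; the extra hypothesis $\widehat{r}_{\mP_{a'}}(\wt)=\tfrac12(\widehat{r}_{\mP_{a^+}}(\wt)+\widehat{r}_{\mP_{a^-}}(\wt))$ to rewrite the anchor risk as the average of the two vertex risks; and A2 $+$ A3 $+$ Proposition~\ref{theory:proposition:l1} to certify that the vertices realize the largest $\ell_1$-spread of the learned representations.

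\textbf{Steps.} First, as in Theorem~\ref{theory:theorem:main}, use A5 to identify $\rpa(\wt)=\sup_{\mP'\in T(\mP,\ma)} r_{\mP'}(\wt)$ with $\sup_{\mP'\in T(\mP,\ma)}\widehat{r}_{\mP'}(\wt)+\phi(|\Theta|,n,\delta)$ via A4, and write the maximizing (pseudo-)distribution as the one sending each $\x_i$ to its per-sample worst case $\x'_i$ (which by A6 equals the cross-entropy worst case $\argmin_{a\in\ma}\y_i^\top f(a(\x_i);\wt)$, exactly as in the theorem). Second, for each $i$ split the indicator through $a'(\x_i)$:
\begin{align*}
\mI[g(f(\x'_i;\wt))\neq\y_i] &\leq \mI[g(f(a'(\x_i);\wt))\neq\y_i] \\
&\quad + \mI[g(f(a'(\x_i);\wt))\neq g(f(\x'_i;\wt))];
\end{align*}
summing the first term over $i$ gives $\widehat{r}_{\mP_{a'}}(\wt)=\tfrac12(\widehat{r}_{\mP_{a^+}}(\wt)+\widehat{r}_{\mP_{a^-}}(\wt))$ by hypothesis, while bounding the second term with the A6-based estimate of Theorem~\ref{theory:theorem:main} (now anchored at $a'(\x_i)$) gives $\mI[g(f(a'(\x_i);\wt))\neq g(f(\x'_i;\wt))]\leq \|f(a'(\x_i);\wt)-f(\x'_i;\wt)\|_1$. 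Third, since $a'(\x_i)$ and $\x'_i$ are both images of $\x_i$ under maps in $\ma$, Proposition~\ref{theory:proposition:l1} (which uses A2) lets us read $\sum_i\|f(a'(\x_i);\wt)-f(\x'_i;\wt)\|_1$ as a multiple of an empirical Wasserstein-$1$ distance between two representation distributions drawn from $\ma$; the empirical form of A3 says this is maximized over $\ma\times\ma$ at $(a^+,a^-)$, and translating back through Proposition~\ref{theory:proposition:l1} once more bounds it by $\sum_i\|f(a^+(\x_i);\wt)-f(a^-(\x'_i);\wt)\|_1$. Collecting the three pieces together with $\phi(|\Theta|,n,\delta)$ yields the stated inequality.

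\textbf{Main obstacle.} I expect the delicate step to be the third one: making the chain ``$\ell_1$ sum $\leftrightarrow$ empirical Wasserstein-$1$ $\leftrightarrow$ maximality at the vertices'' precise. A3 is stated for the distributions $\mP_{a,\wt}$, whereas here we must operate with finite empirical (indeed pseudo-) distributions, so one has to pass to the empirical counterpart of A3 and, crucially, use A2 (efficiency) to pin the optimal transport plan to the sample-wise identity coupling, so that the Wasserstein-$1$ distance really does collapse to the per-index $\ell_1$ sum. A lesser nuisance is re-using the A6 estimate in the second step with $a'(\x_i)$ as the reference point rather than $\x_i$: one needs the ratio bound of A6 (or a mild variant of it, plausible because $a'$ is an intermediate transformation lying between the vertices) to still deliver the $0$--$1$-to-$\ell_1$ comparison there.
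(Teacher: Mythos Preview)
Your proposal is correct and matches the paper's argument in all essentials: reduce to the empirical robust risk via A4/A5, anchor at $a'$ (you do this up front via the indicator split, the paper first derives the identity-anchored bound from Theorem~\ref{theory:theorem:main} and then swaps the anchor to $a'$ by a ``WLOG'' remark), bound the resulting gap by a Wasserstein/$\ell_1$ alignment term via A6 and Proposition~\ref{theory:proposition:l1}, and invoke A3 to replace that alignment term by the vertex pair $(a^+,a^-)$. The two obstacles you flag---that A6 must be reused with $a'$ rather than $\x$ as the reference, and that A3 must be read at the empirical level covering the per-sample worst-case distribution---are exactly the points the paper's proof passes over without further comment.
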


This result corresponds to the method that 
can be optimized conveniently 
without searching for the worst-case transformations. 
However, the method requires good domain knowledge of the vertices 
(\textit{i.e.}, boundary cases)
of the transformation functions.

\begin{table*}[t]
\small 
\centering 
\caption{Comparison to advanced rotation-invariant models. We report the accuracy on the test sets rotated. ``\textsf{main}'' means the resulting images are highly likely to be semantically the same as the original ones. ``\textsf{all}'' means the average accuracy of all rotations. 
The underlined scores show that data augmentation 
and AR can help a vanilla model to compete with 
advanced methods. 
The bold scores (highest at each row) show that data augmentation 
and AR can further improve the advanced methods. 
}

\begin{tabular}{c|ccc|ccc|ccc|ccc}
\hline
 & \multicolumn{3}{c|}{\textsf{ResNet}} & \multicolumn{3}{c|}{\textsf{GC}} & \multicolumn{3}{c|}{\textsf{ST}} & \multicolumn{3}{c}{\textsf{ETN}} \\ 
 & \textsf{Base} & \ra{} & \rwa{} & \textsf{Base} & \ra{} & \rwa{} & \textsf{Base} & \ra{} & \rwa{} & \textsf{Base} & \ra{} & \rwa{} \\ \hline
\textsf{main} & 45.4 & 66.5 & {\ul 71.1} & 38.5 & 72.2 & \textbf{73.8} & 45.9 & 58.3 & 62.9 & 56.9 & 65.1 & 57.7 \\ \hline
\textsf{all} & 31.2 & 48.1 & {\ul 52.8} & 26.7 & 54.4 & \textbf{55.0} & 32.1 & 40.2 & 42.7 & 39.5 & 52.6 & 46.1 \\ \hline
\end{tabular}
\label{tab:real:rotation}
\end{table*}

\begin{table}[]
\small 
\centering 
\caption{Comparison to advanced methods on 9 super-class ImageNet classification with different distribution shifts.}
\begin{tabular}{ccccc}
\hline
 & Acc. & WAcc. & ImageNet-A & ImageNet-S \\ \hline
\textsf{Base} & 90.8 & 88.8 & 24.9 & 41.1 \\
\textsf{SIN} & 88.4 & 86.6 & 24.6 & 40.5 \\
\textsf{LM} & 67.9 & 65.9 & 18.8 & 36.8 \\
\textsf{RUBi} & 90.5 & 88.6 & 27.7 & 42.3 \\
\textsf{RB} & 91.9 & 90.5 & \textbf{29.6} & 41.8 \\
\ra{} & 92.2 & 91.2 & 28.0 & 42.5 \\
\rwa{} & \textbf{92.8} & \textbf{91.6} & 28.8 & \textbf{43.2} \\ \hline
\end{tabular}
\label{tab:real:texture}
\end{table}
\begin{table}[]
\centering
\caption{The generic methods can also improve standard accuracy. }
\small 
\begin{tabular}{cccccccccc}
\hline
\multirow{2}{*}{} & \multicolumn{3}{c}{ResNet18} & \multicolumn{3}{c}{ResNet50} & \multicolumn{3}{c}{ResNet101} \\
 & \base{} & \ra{} & \rwa{} & \base{} & \ra{} & \rwa{} & \base{} & \ra{} & \rwa{} \\ \hline
Top-1 & 75.6 & 100 & 77.2 & 77.4 & 100 & 78.2 & 77.8 & 100 & 78.7 \\
Top-5 & 93.1 & 100 & 93.8 & 93.9 & 100 & 94.4 & 94.4 & 100 & 94.9 \\ \hline
\end{tabular}
\label{tab:accuracy}
\end{table}

Thus, our theoretical discussions have complemented 
our empirical findings in Section~\ref{sec:motivate}
by showing that norm-based regularizations 
can lead to bounded robust error. 
There is a disparity that 
our analytical result is about 
$\ell_1$ norm
while our empirical study suggests 
squared \ltwo{} norm. 
We conjecture the disparity is mainly caused 
by the difficulty in passing the gradient 
of $\ell_1$ norm in practice.

\section{Experiments with Advanced Methods}
\label{sec:exp}

We continue to test the methods we identified 
in comparison to more advanced methods. 
Although we argued for the value of invariance, 
for a fair comparison, we will 
test the performances evaluated by the metrics
the previous methods are designed for. 
Our method will use the same generic approach
and the same transformation functions as in the previous
empirical study, 
although these functions are not necessarily part of the  
distribution shift we test now. 
In summary, 
our method can outperform (or be on par with) these SOTA techniques 
in the robustness metric they are designed for (Section~\ref{sec:exp:robust}). 
In addition, 
we run a side test to show that 
our method can also 
improve accuracy (Section~\ref{sec:exp:accuracy}). 

\subsection{Methods}

Section~\ref{sec:motivate} and Section~\ref{sec:method} lead us to test the following two methods: 
\begin{itemize}
    \item \ra{} (regularized vertex augmentation): using squared $\ell_2$ norm as AR over logits between the original samples and the augmented samples of a fixed vertex transformation function (original samples are considered as from another vertex). 
    \item \rwa{} (regularized worst-case augmentation): using squared $\ell_2$ norm as AR over logits between the original samples and the worst-case augmented samples identified at each iteration. Worst-case samples are generated by the function with the maximum loss when we iterate through all the transformation functions.
\end{itemize}

\subsection{Robustness}
\label{sec:exp:robust}

\paragraph{Rotation}
We compare our results with rotation-invariant models,
mainly Spatial Transformer (\textsf{ST}) \citep{jaderberg2015spatial},
Group Convolution (\textsf{GC}) \citep{cohen2016group}, 
and Equivariant Transformer Network (\textsf{ETN}) \citep{tai2019equivariant}. 
We also tried to run CGNet \citep{kondor2018clebsch}, 
but the method does not seem to scale to the CIFAR10 and ResNet level. 
All these methods are tested with ResNet34 following popular settings in the community. 
The results are in Table~\ref{tab:real:rotation}. 
We test the models every $15^{\circ}$ rotation from $0^{\circ}$  rotation to $345^{\circ}$ rotation. 
Augmentation-related methods use the $\ma$ of ``rotation''
in synthetic experiments, 
so the testing scenario goes beyond what the augmentation methods have seen during training. 

We report two summary results in Table~\ref{tab:real:rotation}. 
``\textsf{main}'' means the average prediction accuracy from images rotated from $300^{\circ}$ to $60^{\circ}$ (passing $0^{\circ}$), 
when the resulting images are highly likely to preserve the class label. ``\textsf{all}'' means the average accuracy of all rotations. 

Our results can be interpreted from two perspectives. 
First, by comparing all the columns in the first panel 
to the first column of the other three panels, 
data augmentation and AR 
can boost a vanilla model 
to outperform other advanced techniques. 
On the other hand, 
by comparing the columns within each panel, 
data augmentation and AR 
can further improve the performances of these techniques. 

Interestingly, the baseline model 
with our generic approach (\rwa{} in the first panel)
can almost compete 
with the advanced methods even when these methods also use augmentation 
and AR (\rwa{} in \textsf{GC} panel). 
We believe this result strongly indicates 
the potential of this simple augmentation and regularization method to match the advanced methods.

In summary, \rwa{} can boost
the vanilla model to outperform advanced methods. 
Data augmentation and squared $\ell_2$ AR can further improve the performances
when plugged onto advanced methods. 

\paragraph{Texture \& Contrast}
We follow \citep{bahng2019learning} 
and compare the models for a nine super-class ImageNet classification \citep{ilyas2019adversarial} with class-balanced strategies. 
Also, we follow \citep{bahng2019learning}
to report standard accuracy (Acc.), 
weighted accuracy (WAcc.), a scenario where
samples with unusual texture are weighted more, 
and accuracy over ImageNet-A \citep{hendrycks2019natural},
a collection of 
failure cases for most ImageNet trained models. 
Additionally, we also report 
the performance over ImageNet-Sketch \citep{wang2019learning}, 
an independently collected ImageNet test set
with only sketch images. 
As \citep{bahng2019learning} mainly aims to overcome 
the texture bias, 
we also use our texture-wise functions 
in Section~\ref{sec:motivate} for augmentation. 
However, there are no direct connections between 
these functions and the distribution shift
of the test samples. 
Also, we believe the distribution shifts here, 
especially the one introduced by our newly added ImageNet-Sketch, 
are more than texture,
and also correspond to the contrast case of our study. 

Following \citep{bahng2019learning}, 
the base network is ResNet, 
and we compare with the vanilla network (\textsf{Base}), 
and several methods designed for this task:
including 
StylisedIN (\textsf{SIN}) \citep{geirhos2018imagenettrained}, 
LearnedMixin (\textsf{LM}) \citep{clark2019don}, 
RUBi (\textsf{RUBi}) \citep{cadene2019rubi}
and ReBias (\textsf{RB}) \citep{bahng2019learning}. 
Results are in Table~\ref{tab:real:texture}. 

The results favor our generic method in most cases. 
\ra{} outperforms other methods in standard accuracy, weighted accuracy, and ImageNet-Sketch, and is shy from ReBias on ImageNet-A. 
\rwa{} shows the same pattern as that of \ra{} and further outperforms \ra{}. 
Overall, these results validate the empirical strength of
data augmentation (even when the augmentation is not designed for the task) and squared $\ell_2$ norm AR for learning robust models. 

\subsection{Accuracy}
\label{sec:exp:accuracy}

Further, 
these experiments help us notice that
the generic technique can also help improve the accuracy, 
although the technique is motivated by robustness and invariance.
Therefore, 
we follow the widely accepted \href{https://github.com/weiaicunzai/pytorch-cifar100}{CIFAR100 test pipeline}
and test the performances of different architectures
of the ResNet family. 
The results are reported in Table~\ref{tab:accuracy}, 
where \textsf{Base} stands for the baseline model with the default accuracy boosting configurations. 

For both top-1 and top-5 accuracies and across the three ResNet architectures, 
our techniques can help improve the accuracy. 
In addition, 
we notice that 
our techniques can help bridge the gap of 
different architectures within the ResNet family:
for example, \rwa{} helps ResNet50 to outperform 
the vanilla ResNet101.

\section{Conclusion}
\label{sec:con}

In this paper, we seek to answer how to train with augmented data so that augmentation can be taken to the fullest extent. We first defined a new evaluation metric called invariance and conducted a line of empirical studies to show that norm-based alignment regularization can help learn robust and invariant models. 
Further, we complement our observations with formal derivations
of bounded generalization errors. 
We notice that 
regularizing squared $\ell_2$ norm between the logits of the originals samples and those of the augmented samples is favorable:
the trained model tends to have the most favorable performances in robust accuracy and invariance. 
In general, the method we recommend is 
``regularized worst-case augmentation'' with squared $\ell_2$ norm as the alignment regularization. 
One can also consider ``regularized vertex augmentation'' when extra assumptions
on the vertex properties of the transformation functions are met. 
Lastly, we would like to remind a potential limitation of alignment regularization: 
it may not always help improve the \textit{i.i.d} accuracy due to the tradeoff between accuracy and robustness or invariance. 
In addition, 
to simplify the procedure of users in leveraging our contribution, 
we also release a software package 
in both \textsf{TensorFlow} and \textsf{PyTorch} 
for users to use our identified methods 
with a couple lines of code.



\begin{acks}
This work was supported by NIH R01GM114311, NIH P30DA035778, and NSF IIS1617583; NSF CAREER IIS-2150012 and IIS-2204808. 
The authors would like to thank Hanru Yan for the implementation of the software package. 
\end{acks}

\bibliographystyle{ACM-Reference-Format}
\bibliography{ref}


\begin{thebibliography}{70}


\ifx \showCODEN    \undefined \def \showCODEN     #1{\unskip}     \fi
\ifx \showDOI      \undefined \def \showDOI       #1{#1}\fi
\ifx \showISBNx    \undefined \def \showISBNx     #1{\unskip}     \fi
\ifx \showISBNxiii \undefined \def \showISBNxiii  #1{\unskip}     \fi
\ifx \showISSN     \undefined \def \showISSN      #1{\unskip}     \fi
\ifx \showLCCN     \undefined \def \showLCCN      #1{\unskip}     \fi
\ifx \shownote     \undefined \def \shownote      #1{#1}          \fi
\ifx \showarticletitle \undefined \def \showarticletitle #1{#1}   \fi
\ifx \showURL      \undefined \def \showURL       {\relax}        \fi
\providecommand\bibfield[2]{#2}
\providecommand\bibinfo[2]{#2}
\providecommand\natexlab[1]{#1}
\providecommand\showeprint[2][]{arXiv:#2}

\bibitem[\protect\citeauthoryear{Abu-Mostafa}{Abu-Mostafa}{1990}]%
        {abu1990learning}
\bibfield{author}{\bibinfo{person}{Yaser~S Abu-Mostafa}.}
  \bibinfo{year}{1990}\natexlab{}.
\newblock \showarticletitle{Learning from hints in neural networks}.
\newblock \bibinfo{journal}{\emph{Journal of complexity}} \bibinfo{volume}{6},
  \bibinfo{number}{2} (\bibinfo{year}{1990}), \bibinfo{pages}{192--198}.
\newblock


\bibitem[\protect\citeauthoryear{Achille and Soatto}{Achille and
  Soatto}{2018}]%
        {achille2018information}
\bibfield{author}{\bibinfo{person}{Alessandro Achille} {and}
  \bibinfo{person}{Stefano Soatto}.} \bibinfo{year}{2018}\natexlab{}.
\newblock \showarticletitle{Information dropout: Learning optimal
  representations through noisy computation}.
\newblock \bibinfo{journal}{\emph{IEEE transactions on pattern analysis and
  machine intelligence}} \bibinfo{volume}{40}, \bibinfo{number}{12}
  (\bibinfo{year}{2018}), \bibinfo{pages}{2897--2905}.
\newblock


\bibitem[\protect\citeauthoryear{Arjovsky, Chintala, and Bottou}{Arjovsky
  et~al\mbox{.}}{2017}]%
        {arjovsky2017wasserstein}
\bibfield{author}{\bibinfo{person}{Martin Arjovsky}, \bibinfo{person}{Soumith
  Chintala}, {and} \bibinfo{person}{Léon Bottou}.}
  \bibinfo{year}{2017}\natexlab{}.
\newblock \bibinfo{title}{Wasserstein GAN}.
\newblock
\newblock
\showeprint[arxiv]{1701.07875}~[stat.ML]


\bibitem[\protect\citeauthoryear{Asai and Hajishirzi}{Asai and
  Hajishirzi}{2020}]%
        {asai2020logicguided}
\bibfield{author}{\bibinfo{person}{Akari Asai} {and} \bibinfo{person}{Hannaneh
  Hajishirzi}.} \bibinfo{year}{2020}\natexlab{}.
\newblock \bibinfo{title}{Logic-Guided Data Augmentation and Regularization for
  Consistent Question Answering}.
\newblock
\newblock
\showeprint[arxiv]{2004.10157}~[cs.CL]


\bibitem[\protect\citeauthoryear{Bahng, Chun, Yun, Choo, and Oh}{Bahng
  et~al\mbox{.}}{2020}]%
        {bahng2019learning}
\bibfield{author}{\bibinfo{person}{Hyojin Bahng}, \bibinfo{person}{Sanghyuk
  Chun}, \bibinfo{person}{Sangdoo Yun}, \bibinfo{person}{Jaegul Choo}, {and}
  \bibinfo{person}{Seong~Joon Oh}.} \bibinfo{year}{2020}\natexlab{}.
\newblock \showarticletitle{Learning De-biased Representations with Biased
  Representations}.
\newblock   \bibinfo{volume}{119} (\bibinfo{year}{2020}),
  \bibinfo{pages}{528--539}.
\newblock


\bibitem[\protect\citeauthoryear{Bobkov and Ledoux}{Bobkov and Ledoux}{2019}]%
        {bobkov2019one}
\bibfield{author}{\bibinfo{person}{Sergey Bobkov} {and} \bibinfo{person}{Michel
  Ledoux}.} \bibinfo{year}{2019}\natexlab{}.
\newblock \bibinfo{booktitle}{\emph{One-dimensional empirical measures, order
  statistics, and Kantorovich transport distances}}.
  Vol.~\bibinfo{volume}{261}.
\newblock \bibinfo{publisher}{American Mathematical Society}.
\newblock


\bibitem[\protect\citeauthoryear{Bousquet, Boucheron, and Lugosi}{Bousquet
  et~al\mbox{.}}{2003}]%
        {bousquet2003introduction}
\bibfield{author}{\bibinfo{person}{Olivier Bousquet},
  \bibinfo{person}{St{\'e}phane Boucheron}, {and} \bibinfo{person}{G{\'a}bor
  Lugosi}.} \bibinfo{year}{2003}\natexlab{}.
\newblock \showarticletitle{Introduction to statistical learning theory}. In
  \bibinfo{booktitle}{\emph{Summer School on Machine Learning}}. Springer,
  \bibinfo{pages}{169--207}.
\newblock


\bibitem[\protect\citeauthoryear{Buslaev, Iglovikov, Khvedchenya, Parinov,
  Druzhinin, and Kalinin}{Buslaev et~al\mbox{.}}{2020}]%
        {BuslaevIKPDK20}
\bibfield{author}{\bibinfo{person}{Alexander Buslaev},
  \bibinfo{person}{Vladimir~I. Iglovikov}, \bibinfo{person}{Eugene
  Khvedchenya}, \bibinfo{person}{Alex Parinov}, \bibinfo{person}{Mikhail
  Druzhinin}, {and} \bibinfo{person}{Alexandr~A. Kalinin}.}
  \bibinfo{year}{2020}\natexlab{}.
\newblock \showarticletitle{Albumentations: Fast and Flexible Image
  Augmentations}.
\newblock \bibinfo{journal}{\emph{Inf.}} \bibinfo{volume}{11},
  \bibinfo{number}{2} (\bibinfo{year}{2020}), \bibinfo{pages}{125}.
\newblock


\bibitem[\protect\citeauthoryear{Cadene, Dancette, Cord, Parikh,
  et~al\mbox{.}}{Cadene et~al\mbox{.}}{2019}]%
        {cadene2019rubi}
\bibfield{author}{\bibinfo{person}{Remi Cadene}, \bibinfo{person}{Corentin
  Dancette}, \bibinfo{person}{Matthieu Cord}, \bibinfo{person}{Devi Parikh},
  {et~al\mbox{.}}} \bibinfo{year}{2019}\natexlab{}.
\newblock \showarticletitle{Rubi: Reducing unimodal biases for visual question
  answering}. In \bibinfo{booktitle}{\emph{Advances in neural information
  processing systems}}. \bibinfo{pages}{841--852}.
\newblock


\bibitem[\protect\citeauthoryear{Chen, Dobriban, and Lee}{Chen
  et~al\mbox{.}}{2019}]%
        {chen2019grouptheoretic}
\bibfield{author}{\bibinfo{person}{Shuxiao Chen}, \bibinfo{person}{Edgar
  Dobriban}, {and} \bibinfo{person}{Jane~H Lee}.}
  \bibinfo{year}{2019}\natexlab{}.
\newblock \bibinfo{title}{A Group-Theoretic Framework for Data Augmentation}.
\newblock
\newblock
\showeprint[arxiv]{1907.10905}~[stat.ML]


\bibitem[\protect\citeauthoryear{Clark, Yatskar, and Zettlemoyer}{Clark
  et~al\mbox{.}}{2019}]%
        {clark2019don}
\bibfield{author}{\bibinfo{person}{Christopher Clark}, \bibinfo{person}{Mark
  Yatskar}, {and} \bibinfo{person}{Luke Zettlemoyer}.}
  \bibinfo{year}{2019}\natexlab{}.
\newblock \showarticletitle{Don't Take the Easy Way Out: Ensemble Based Methods
  for Avoiding Known Dataset Biases}.
\newblock \bibinfo{journal}{\emph{arXiv preprint arXiv:1909.03683}}
  (\bibinfo{year}{2019}).
\newblock


\bibitem[\protect\citeauthoryear{Cohen and Welling}{Cohen and Welling}{2016}]%
        {cohen2016group}
\bibfield{author}{\bibinfo{person}{Taco Cohen} {and} \bibinfo{person}{Max
  Welling}.} \bibinfo{year}{2016}\natexlab{}.
\newblock \showarticletitle{Group equivariant convolutional networks}. In
  \bibinfo{booktitle}{\emph{International conference on machine learning}}.
  \bibinfo{pages}{2990--2999}.
\newblock


\bibitem[\protect\citeauthoryear{Cubuk, Zoph, Mane, Vasudevan, and Le}{Cubuk
  et~al\mbox{.}}{2019}]%
        {cubuk2019autoaugment}
\bibfield{author}{\bibinfo{person}{Ekin~D Cubuk}, \bibinfo{person}{Barret
  Zoph}, \bibinfo{person}{Dandelion Mane}, \bibinfo{person}{Vijay Vasudevan},
  {and} \bibinfo{person}{Quoc~V Le}.} \bibinfo{year}{2019}\natexlab{}.
\newblock \showarticletitle{Autoaugment: Learning augmentation strategies from
  data}. In \bibinfo{booktitle}{\emph{Proceedings of the IEEE conference on
  computer vision and pattern recognition}}. \bibinfo{pages}{113--123}.
\newblock


\bibitem[\protect\citeauthoryear{Cuturi and Doucet}{Cuturi and Doucet}{2014}]%
        {cuturi2014fast}
\bibfield{author}{\bibinfo{person}{Marco Cuturi} {and} \bibinfo{person}{Arnaud
  Doucet}.} \bibinfo{year}{2014}\natexlab{}.
\newblock \showarticletitle{Fast computation of Wasserstein barycenters}.
\newblock  (\bibinfo{year}{2014}).
\newblock


\bibitem[\protect\citeauthoryear{Dao, Gu, Ratner, Smith, Sa, and R{\'{e}}}{Dao
  et~al\mbox{.}}{2019}]%
        {DaoGRSSR19}
\bibfield{author}{\bibinfo{person}{Tri Dao}, \bibinfo{person}{Albert Gu},
  \bibinfo{person}{Alexander Ratner}, \bibinfo{person}{Virginia Smith},
  \bibinfo{person}{Chris~De Sa}, {and} \bibinfo{person}{Christopher R{\'{e}}}.}
  \bibinfo{year}{2019}\natexlab{}.
\newblock \showarticletitle{A Kernel Theory of Modern Data Augmentation}. In
  \bibinfo{booktitle}{\emph{Proceedings of the 36th International Conference on
  Machine Learning, {ICML} 2019, 9-15 June 2019, Long Beach, California,
  {USA}}} \emph{(\bibinfo{series}{Proceedings of Machine Learning Research},
  Vol.~\bibinfo{volume}{97})}, \bibfield{editor}{\bibinfo{person}{Kamalika
  Chaudhuri} {and} \bibinfo{person}{Ruslan Salakhutdinov}} (Eds.).
  \bibinfo{publisher}{{PMLR}}, \bibinfo{pages}{1528--1537}.
\newblock


\bibitem[\protect\citeauthoryear{Fawzi, Samulowitz, Turaga, and Frossard}{Fawzi
  et~al\mbox{.}}{2016}]%
        {FawziSTF16}
\bibfield{author}{\bibinfo{person}{Alhussein Fawzi}, \bibinfo{person}{Horst
  Samulowitz}, \bibinfo{person}{Deepak~S. Turaga}, {and}
  \bibinfo{person}{Pascal Frossard}.} \bibinfo{year}{2016}\natexlab{}.
\newblock \showarticletitle{Adaptive data augmentation for image
  classification}. In \bibinfo{booktitle}{\emph{2016 {IEEE} International
  Conference on Image Processing, {ICIP} 2016, Phoenix, AZ, USA, September
  25-28, 2016}}. \bibinfo{publisher}{{IEEE}}, \bibinfo{pages}{3688--3692}.
\newblock


\bibitem[\protect\citeauthoryear{Ganin, Ustinova, Ajakan, Germain, Larochelle,
  Laviolette, Marchand, and Lempitsky}{Ganin et~al\mbox{.}}{2016}]%
        {ganin2016domain}
\bibfield{author}{\bibinfo{person}{Yaroslav Ganin}, \bibinfo{person}{Evgeniya
  Ustinova}, \bibinfo{person}{Hana Ajakan}, \bibinfo{person}{Pascal Germain},
  \bibinfo{person}{Hugo Larochelle}, \bibinfo{person}{Fran{\c{c}}ois
  Laviolette}, \bibinfo{person}{Mario Marchand}, {and} \bibinfo{person}{Victor
  Lempitsky}.} \bibinfo{year}{2016}\natexlab{}.
\newblock \showarticletitle{Domain-adversarial training of neural networks}.
\newblock \bibinfo{journal}{\emph{The Journal of Machine Learning Research}}
  \bibinfo{volume}{17}, \bibinfo{number}{1} (\bibinfo{year}{2016}),
  \bibinfo{pages}{2096--2030}.
\newblock


\bibitem[\protect\citeauthoryear{Geirhos, Rubisch, Michaelis, Bethge, Wichmann,
  and Brendel}{Geirhos et~al\mbox{.}}{2019}]%
        {geirhos2018imagenettrained}
\bibfield{author}{\bibinfo{person}{Robert Geirhos}, \bibinfo{person}{Patricia
  Rubisch}, \bibinfo{person}{Claudio Michaelis}, \bibinfo{person}{Matthias
  Bethge}, \bibinfo{person}{Felix~A. Wichmann}, {and} \bibinfo{person}{Wieland
  Brendel}.} \bibinfo{year}{2019}\natexlab{}.
\newblock \showarticletitle{ImageNet-trained {CNN}s are biased towards texture;
  increasing shape bias improves accuracy and robustness.}. In
  \bibinfo{booktitle}{\emph{International Conference on Learning
  Representations}}.
\newblock


\bibitem[\protect\citeauthoryear{Ghiasi, Cui, Srinivas, Qian, Lin, Cubuk, Le,
  and Zoph}{Ghiasi et~al\mbox{.}}{2020}]%
        {Ghiasi20}
\bibfield{author}{\bibinfo{person}{Golnaz Ghiasi}, \bibinfo{person}{Yin Cui},
  \bibinfo{person}{Aravind Srinivas}, \bibinfo{person}{Rui Qian},
  \bibinfo{person}{Tsung{-}Yi Lin}, \bibinfo{person}{Ekin~D. Cubuk},
  \bibinfo{person}{Quoc~V. Le}, {and} \bibinfo{person}{Barret Zoph}.}
  \bibinfo{year}{2020}\natexlab{}.
\newblock \showarticletitle{Simple Copy-Paste is a Strong Data Augmentation
  Method for Instance Segmentation}.
\newblock \bibinfo{journal}{\emph{CoRR}}  \bibinfo{volume}{abs/2012.07177}
  (\bibinfo{year}{2020}).
\newblock
\showeprint[arxiv]{2012.07177}


\bibitem[\protect\citeauthoryear{Ghosh and Thiery}{Ghosh and Thiery}{2021}]%
        {ghosh2021on}
\bibfield{author}{\bibinfo{person}{Atin Ghosh} {and}
  \bibinfo{person}{Alexandre~H. Thiery}.} \bibinfo{year}{2021}\natexlab{}.
\newblock \showarticletitle{On Data-Augmentation and Consistency-Based
  Semi-Supervised Learning}. In \bibinfo{booktitle}{\emph{International
  Conference on Learning Representations}}.
\newblock


\bibitem[\protect\citeauthoryear{Goodfellow}{Goodfellow}{2016}]%
        {goodfellow2016nips}
\bibfield{author}{\bibinfo{person}{Ian Goodfellow}.}
  \bibinfo{year}{2016}\natexlab{}.
\newblock \showarticletitle{NIPS 2016 tutorial: Generative adversarial
  networks}.
\newblock \bibinfo{journal}{\emph{arXiv preprint arXiv:1701.00160}}
  (\bibinfo{year}{2016}).
\newblock


\bibitem[\protect\citeauthoryear{Goodfellow, Shlens, and Szegedy}{Goodfellow
  et~al\mbox{.}}{2014}]%
        {goodfellow2014explaining}
\bibfield{author}{\bibinfo{person}{Ian~J Goodfellow}, \bibinfo{person}{Jonathon
  Shlens}, {and} \bibinfo{person}{Christian Szegedy}.}
  \bibinfo{year}{2014}\natexlab{}.
\newblock \showarticletitle{Explaining and harnessing adversarial examples}.
\newblock \bibinfo{journal}{\emph{arXiv preprint arXiv:1412.6572}}
  (\bibinfo{year}{2014}).
\newblock


\bibitem[\protect\citeauthoryear{Gulrajani, Ahmed, Arjovsky, Dumoulin, and
  Courville}{Gulrajani et~al\mbox{.}}{2017}]%
        {gulrajani2017improved}
\bibfield{author}{\bibinfo{person}{Ishaan Gulrajani}, \bibinfo{person}{Faruk
  Ahmed}, \bibinfo{person}{Martin Arjovsky}, \bibinfo{person}{Vincent
  Dumoulin}, {and} \bibinfo{person}{Aaron Courville}.}
  \bibinfo{year}{2017}\natexlab{}.
\newblock \bibinfo{title}{Improved Training of Wasserstein GANs}.
\newblock
\newblock
\showeprint[arxiv]{1704.00028}~[cs.LG]


\bibitem[\protect\citeauthoryear{Guo, Zheng, Fan, Yu, and Wang}{Guo
  et~al\mbox{.}}{2019}]%
        {ZFY019}
\bibfield{author}{\bibinfo{person}{Hao Guo}, \bibinfo{person}{Kang Zheng},
  \bibinfo{person}{Xiaochuan Fan}, \bibinfo{person}{Hongkai Yu}, {and}
  \bibinfo{person}{Song Wang}.} \bibinfo{year}{2019}\natexlab{}.
\newblock \showarticletitle{Visual Attention Consistency Under Image Transforms
  for Multi-Label Image Classification}. In \bibinfo{booktitle}{\emph{{IEEE}
  Conference on Computer Vision and Pattern Recognition, {CVPR} 2019, Long
  Beach, CA, USA, June 16-20, 2019}}. \bibinfo{publisher}{Computer Vision
  Foundation / {IEEE}}, \bibinfo{pages}{729--739}.
\newblock


\bibitem[\protect\citeauthoryear{Hendrycks and Dietterich}{Hendrycks and
  Dietterich}{2019}]%
        {hendrycks2019robustness}
\bibfield{author}{\bibinfo{person}{Dan Hendrycks} {and} \bibinfo{person}{Thomas
  Dietterich}.} \bibinfo{year}{2019}\natexlab{}.
\newblock \showarticletitle{Benchmarking Neural Network Robustness to Common
  Corruptions and Perturbations}.
\newblock \bibinfo{journal}{\emph{Proceedings of the International Conference
  on Learning Representations}} (\bibinfo{year}{2019}).
\newblock


\bibitem[\protect\citeauthoryear{Hendrycks, Mu, Cubuk, Zoph, Gilmer, and
  Lakshminarayanan}{Hendrycks et~al\mbox{.}}{2020}]%
        {Hendrycks2020augmix}
\bibfield{author}{\bibinfo{person}{Dan Hendrycks}, \bibinfo{person}{Norman Mu},
  \bibinfo{person}{Ekin~Dogus Cubuk}, \bibinfo{person}{Barret Zoph},
  \bibinfo{person}{Justin Gilmer}, {and} \bibinfo{person}{Balaji
  Lakshminarayanan}.} \bibinfo{year}{2020}\natexlab{}.
\newblock \showarticletitle{AugMix: {A} Simple Data Processing Method to
  Improve Robustness and Uncertainty}. In \bibinfo{booktitle}{\emph{8th
  International Conference on Learning Representations, {ICLR} 2020, Addis
  Ababa, Ethiopia, April 26-30, 2020}}. \bibinfo{publisher}{OpenReview.net}.
\newblock


\bibitem[\protect\citeauthoryear{Hendrycks, Zhao, Basart, Steinhardt, and
  Song}{Hendrycks et~al\mbox{.}}{2019}]%
        {hendrycks2019natural}
\bibfield{author}{\bibinfo{person}{Dan Hendrycks}, \bibinfo{person}{Kevin
  Zhao}, \bibinfo{person}{Steven Basart}, \bibinfo{person}{Jacob Steinhardt},
  {and} \bibinfo{person}{Dawn Song}.} \bibinfo{year}{2019}\natexlab{}.
\newblock \showarticletitle{Natural adversarial examples}.
\newblock \bibinfo{journal}{\emph{arXiv preprint arXiv:1907.07174}}
  (\bibinfo{year}{2019}).
\newblock


\bibitem[\protect\citeauthoryear{Hernandez-Garcia}{Hernandez-Garcia}{2020}]%
        {hernandez2020data}
\bibfield{author}{\bibinfo{person}{Alex Hernandez-Garcia}.}
  \bibinfo{year}{2020}\natexlab{}.
\newblock \showarticletitle{Data augmentation and image understanding}.
\newblock \bibinfo{journal}{\emph{arXiv preprint arXiv:2012.14185}}
  (\bibinfo{year}{2020}).
\newblock


\bibitem[\protect\citeauthoryear{Hern{\'a}ndez-Garc{\'\i}a, K{\"o}nig, and
  Kietzmann}{Hern{\'a}ndez-Garc{\'\i}a et~al\mbox{.}}{2019}]%
        {hernandez2019learning}
\bibfield{author}{\bibinfo{person}{Alex Hern{\'a}ndez-Garc{\'\i}a},
  \bibinfo{person}{Peter K{\"o}nig}, {and} \bibinfo{person}{Tim~C Kietzmann}.}
  \bibinfo{year}{2019}\natexlab{}.
\newblock \showarticletitle{Learning robust visual representations using data
  augmentation invariance}.
\newblock \bibinfo{journal}{\emph{arXiv preprint arXiv:1906.04547}}
  (\bibinfo{year}{2019}).
\newblock


\bibitem[\protect\citeauthoryear{Hernández-García and
  König}{Hernández-García and König}{2018}]%
        {hernndezgarca2018data}
\bibfield{author}{\bibinfo{person}{Alex Hernández-García} {and}
  \bibinfo{person}{Peter König}.} \bibinfo{year}{2018}\natexlab{}.
\newblock \bibinfo{title}{Data augmentation instead of explicit
  regularization}.
\newblock
\newblock
\showeprint[arxiv]{1806.03852}~[cs.CV]


\bibitem[\protect\citeauthoryear{Ho, Liang, Chen, Stoica, and Abbeel}{Ho
  et~al\mbox{.}}{2019}]%
        {HoLCSA19}
\bibfield{author}{\bibinfo{person}{Daniel Ho}, \bibinfo{person}{Eric Liang},
  \bibinfo{person}{Xi Chen}, \bibinfo{person}{Ion Stoica}, {and}
  \bibinfo{person}{Pieter Abbeel}.} \bibinfo{year}{2019}\natexlab{}.
\newblock \showarticletitle{Population Based Augmentation: Efficient Learning
  of Augmentation Policy Schedules}. In \bibinfo{booktitle}{\emph{Proceedings
  of the 36th International Conference on Machine Learning, {ICML} 2019, 9-15
  June 2019, Long Beach, California, {USA}}}
  \emph{(\bibinfo{series}{Proceedings of Machine Learning Research},
  Vol.~\bibinfo{volume}{97})}, \bibfield{editor}{\bibinfo{person}{Kamalika
  Chaudhuri} {and} \bibinfo{person}{Ruslan Salakhutdinov}} (Eds.).
  \bibinfo{publisher}{{PMLR}}, \bibinfo{pages}{2731--2741}.
\newblock


\bibitem[\protect\citeauthoryear{Huang, Wang, Xing, and Huang}{Huang
  et~al\mbox{.}}{2020}]%
        {HuangWXH20}
\bibfield{author}{\bibinfo{person}{Zeyi Huang}, \bibinfo{person}{Haohan Wang},
  \bibinfo{person}{Eric~P. Xing}, {and} \bibinfo{person}{Dong Huang}.}
  \bibinfo{year}{2020}\natexlab{}.
\newblock \showarticletitle{Self-challenging Improves Cross-Domain
  Generalization}. In \bibinfo{booktitle}{\emph{Computer Vision - {ECCV} 2020 -
  16th European Conference, Glasgow, UK, August 23-28, 2020, Proceedings, Part
  {II}}} \emph{(\bibinfo{series}{Lecture Notes in Computer Science},
  Vol.~\bibinfo{volume}{12347})}, \bibfield{editor}{\bibinfo{person}{Andrea
  Vedaldi}, \bibinfo{person}{Horst Bischof}, \bibinfo{person}{Thomas Brox},
  {and} \bibinfo{person}{Jan{-}Michael Frahm}} (Eds.).
  \bibinfo{publisher}{Springer}, \bibinfo{pages}{124--140}.
\newblock


\bibitem[\protect\citeauthoryear{Ilyas, Santurkar, Tsipras, Engstrom, Tran, and
  Madry}{Ilyas et~al\mbox{.}}{2019}]%
        {ilyas2019adversarial}
\bibfield{author}{\bibinfo{person}{Andrew Ilyas}, \bibinfo{person}{Shibani
  Santurkar}, \bibinfo{person}{Dimitris Tsipras}, \bibinfo{person}{Logan
  Engstrom}, \bibinfo{person}{Brandon Tran}, {and} \bibinfo{person}{Aleksander
  Madry}.} \bibinfo{year}{2019}\natexlab{}.
\newblock \showarticletitle{Adversarial examples are not bugs, they are
  features}. In \bibinfo{booktitle}{\emph{Advances in Neural Information
  Processing Systems}}. \bibinfo{pages}{125--136}.
\newblock


\bibitem[\protect\citeauthoryear{Jaderberg, Simonyan, Zisserman,
  et~al\mbox{.}}{Jaderberg et~al\mbox{.}}{2015}]%
        {jaderberg2015spatial}
\bibfield{author}{\bibinfo{person}{Max Jaderberg}, \bibinfo{person}{Karen
  Simonyan}, \bibinfo{person}{Andrew Zisserman}, {et~al\mbox{.}}}
  \bibinfo{year}{2015}\natexlab{}.
\newblock \showarticletitle{Spatial transformer networks}. In
  \bibinfo{booktitle}{\emph{Advances in neural information processing
  systems}}. \bibinfo{pages}{2017--2025}.
\newblock


\bibitem[\protect\citeauthoryear{Jeong, Lee, Kim, and Kwak}{Jeong
  et~al\mbox{.}}{2019}]%
        {jeong2019consistency}
\bibfield{author}{\bibinfo{person}{Jisoo Jeong}, \bibinfo{person}{Seungeui
  Lee}, \bibinfo{person}{Jeesoo Kim}, {and} \bibinfo{person}{Nojun Kwak}.}
  \bibinfo{year}{2019}\natexlab{}.
\newblock \showarticletitle{Consistency-based Semi-supervised Learning for
  Object detection}. In \bibinfo{booktitle}{\emph{Advances in Neural
  Information Processing Systems}}. \bibinfo{pages}{10758--10767}.
\newblock


\bibitem[\protect\citeauthoryear{Kannan, Kurakin, and Goodfellow}{Kannan
  et~al\mbox{.}}{2018}]%
        {kannan2018adversarial}
\bibfield{author}{\bibinfo{person}{Harini Kannan}, \bibinfo{person}{Alexey
  Kurakin}, {and} \bibinfo{person}{Ian Goodfellow}.}
  \bibinfo{year}{2018}\natexlab{}.
\newblock \bibinfo{title}{Adversarial Logit Pairing}.
\newblock
\newblock
\showeprint[arxiv]{1803.06373}~[cs.LG]


\bibitem[\protect\citeauthoryear{Kondor, Lin, and Trivedi}{Kondor
  et~al\mbox{.}}{2018}]%
        {kondor2018clebsch}
\bibfield{author}{\bibinfo{person}{Risi Kondor}, \bibinfo{person}{Zhen Lin},
  {and} \bibinfo{person}{Shubhendu Trivedi}.} \bibinfo{year}{2018}\natexlab{}.
\newblock \showarticletitle{Clebsch--gordan nets: a fully fourier space
  spherical convolutional neural network}. In
  \bibinfo{booktitle}{\emph{Advances in Neural Information Processing
  Systems}}. \bibinfo{pages}{10117--10126}.
\newblock


\bibitem[\protect\citeauthoryear{Kostrikov, Yarats, and Fergus}{Kostrikov
  et~al\mbox{.}}{2020}]%
        {Kostrikov21}
\bibfield{author}{\bibinfo{person}{Ilya Kostrikov}, \bibinfo{person}{Denis
  Yarats}, {and} \bibinfo{person}{Rob Fergus}.}
  \bibinfo{year}{2020}\natexlab{}.
\newblock \showarticletitle{Image Augmentation Is All You Need: Regularizing
  Deep Reinforcement Learning from Pixels}.
\newblock \bibinfo{journal}{\emph{CoRR}}  \bibinfo{volume}{abs/2004.13649}
  (\bibinfo{year}{2020}).
\newblock


\bibitem[\protect\citeauthoryear{LeCun, Bottou, Bengio, and Haffner}{LeCun
  et~al\mbox{.}}{1998}]%
        {lecun1998gradient}
\bibfield{author}{\bibinfo{person}{Yann LeCun}, \bibinfo{person}{L{\'e}on
  Bottou}, \bibinfo{person}{Yoshua Bengio}, {and} \bibinfo{person}{Patrick
  Haffner}.} \bibinfo{year}{1998}\natexlab{}.
\newblock \showarticletitle{Gradient-based learning applied to document
  recognition}.
\newblock \bibinfo{journal}{\emph{Proc. IEEE}} \bibinfo{volume}{86},
  \bibinfo{number}{11} (\bibinfo{year}{1998}), \bibinfo{pages}{2278--2324}.
\newblock


\bibitem[\protect\citeauthoryear{Liang, Huang, and Lipton}{Liang
  et~al\mbox{.}}{2018}]%
        {liang2018learning}
\bibfield{author}{\bibinfo{person}{Davis Liang}, \bibinfo{person}{Zhiheng
  Huang}, {and} \bibinfo{person}{Zachary~C Lipton}.}
  \bibinfo{year}{2018}\natexlab{}.
\newblock \showarticletitle{Learning noise-invariant representations for robust
  speech recognition}. In \bibinfo{booktitle}{\emph{2018 IEEE Spoken Language
  Technology Workshop (SLT)}}. IEEE, \bibinfo{pages}{56--63}.
\newblock


\bibitem[\protect\citeauthoryear{Liang}{Liang}{2016}]%
        {liang2016cs229t}
\bibfield{author}{\bibinfo{person}{Percy Liang}.}
  \bibinfo{year}{2016}\natexlab{}.
\newblock \bibinfo{title}{CS229T/STAT231: Statistical Learning Theory (Winter
  2016)}.
\newblock
\newblock


\bibitem[\protect\citeauthoryear{Lim, Kim, Kim, Kim, and Kim}{Lim
  et~al\mbox{.}}{2019}]%
        {LimKKKK19}
\bibfield{author}{\bibinfo{person}{Sungbin Lim}, \bibinfo{person}{Ildoo Kim},
  \bibinfo{person}{Taesup Kim}, \bibinfo{person}{Chiheon Kim}, {and}
  \bibinfo{person}{Sungwoong Kim}.} \bibinfo{year}{2019}\natexlab{}.
\newblock \showarticletitle{Fast AutoAugment}. In
  \bibinfo{booktitle}{\emph{Advances in Neural Information Processing Systems
  32: Annual Conference on Neural Information Processing Systems 2019, NeurIPS
  2019, December 8-14, 2019, Vancouver, BC, Canada}},
  \bibfield{editor}{\bibinfo{person}{Hanna~M. Wallach}, \bibinfo{person}{Hugo
  Larochelle}, \bibinfo{person}{Alina Beygelzimer}, \bibinfo{person}{Florence
  d'Alch{\'{e}}{-}Buc}, \bibinfo{person}{Emily~B. Fox}, {and}
  \bibinfo{person}{Roman Garnett}} (Eds.). \bibinfo{pages}{6662--6672}.
\newblock


\bibitem[\protect\citeauthoryear{Lopes, Yin, Poole, Gilmer, and Cubuk}{Lopes
  et~al\mbox{.}}{2019}]%
        {lopes2019improving}
\bibfield{author}{\bibinfo{person}{Raphael~Gontijo Lopes},
  \bibinfo{person}{Dong Yin}, \bibinfo{person}{Ben Poole},
  \bibinfo{person}{Justin Gilmer}, {and} \bibinfo{person}{Ekin~D Cubuk}.}
  \bibinfo{year}{2019}\natexlab{}.
\newblock \showarticletitle{Improving robustness without sacrificing accuracy
  with patch gaussian augmentation}.
\newblock \bibinfo{journal}{\emph{arXiv preprint arXiv:1906.02611}}
  (\bibinfo{year}{2019}).
\newblock


\bibitem[\protect\citeauthoryear{Madry, Makelov, Schmidt, Tsipras, and
  Vladu}{Madry et~al\mbox{.}}{2018}]%
        {MadryMSTV18}
\bibfield{author}{\bibinfo{person}{Aleksander Madry},
  \bibinfo{person}{Aleksandar Makelov}, \bibinfo{person}{Ludwig Schmidt},
  \bibinfo{person}{Dimitris Tsipras}, {and} \bibinfo{person}{Adrian Vladu}.}
  \bibinfo{year}{2018}\natexlab{}.
\newblock \showarticletitle{Towards Deep Learning Models Resistant to
  Adversarial Attacks}. In \bibinfo{booktitle}{\emph{6th International
  Conference on Learning Representations, {ICLR} 2018, Vancouver, BC, Canada,
  April 30 - May 3, 2018, Conference Track Proceedings}}.
  \bibinfo{publisher}{OpenReview.net}.
\newblock


\bibitem[\protect\citeauthoryear{Min, McCoy, Das, Pitler, and Linzen}{Min
  et~al\mbox{.}}{2020}]%
        {MinMDPL20}
\bibfield{author}{\bibinfo{person}{Junghyun Min}, \bibinfo{person}{R.~Thomas
  McCoy}, \bibinfo{person}{Dipanjan Das}, \bibinfo{person}{Emily Pitler}, {and}
  \bibinfo{person}{Tal Linzen}.} \bibinfo{year}{2020}\natexlab{}.
\newblock \showarticletitle{Syntactic Data Augmentation Increases Robustness to
  Inference Heuristics}. In \bibinfo{booktitle}{\emph{ACL}}.
\newblock


\bibitem[\protect\citeauthoryear{Rajput, Feng, Charles, Loh, and
  Papailiopoulos}{Rajput et~al\mbox{.}}{2019}]%
        {rajput2019does}
\bibfield{author}{\bibinfo{person}{Shashank Rajput}, \bibinfo{person}{Zhili
  Feng}, \bibinfo{person}{Zachary Charles}, \bibinfo{person}{Po-Ling Loh},
  {and} \bibinfo{person}{Dimitris Papailiopoulos}.}
  \bibinfo{year}{2019}\natexlab{}.
\newblock \showarticletitle{Does Data Augmentation Lead to Positive Margin?}.
  In \bibinfo{booktitle}{\emph{International Conference on Machine Learning}}.
  \bibinfo{pages}{5321--5330}.
\newblock


\bibitem[\protect\citeauthoryear{Sajjadi, Javanmardi, and Tasdizen}{Sajjadi
  et~al\mbox{.}}{2016}]%
        {sajjadi2016regularization}
\bibfield{author}{\bibinfo{person}{Mehdi Sajjadi}, \bibinfo{person}{Mehran
  Javanmardi}, {and} \bibinfo{person}{Tolga Tasdizen}.}
  \bibinfo{year}{2016}\natexlab{}.
\newblock \showarticletitle{Regularization with stochastic transformations and
  perturbations for deep semi-supervised learning}. In
  \bibinfo{booktitle}{\emph{Advances in neural information processing
  systems}}. \bibinfo{pages}{1163--1171}.
\newblock


\bibitem[\protect\citeauthoryear{Shah, Chen, Rohrbach, and Parikh}{Shah
  et~al\mbox{.}}{2019}]%
        {Shah_2019_CVPR}
\bibfield{author}{\bibinfo{person}{Meet Shah}, \bibinfo{person}{Xinlei Chen},
  \bibinfo{person}{Marcus Rohrbach}, {and} \bibinfo{person}{Devi Parikh}.}
  \bibinfo{year}{2019}\natexlab{}.
\newblock \showarticletitle{Cycle-Consistency for Robust Visual Question
  Answering}. In \bibinfo{booktitle}{\emph{The IEEE Conference on Computer
  Vision and Pattern Recognition (CVPR)}}.
\newblock


\bibitem[\protect\citeauthoryear{Shorten and Khoshgoftaar}{Shorten and
  Khoshgoftaar}{2019}]%
        {shorten2019survey}
\bibfield{author}{\bibinfo{person}{Connor Shorten} {and}
  \bibinfo{person}{Taghi~M Khoshgoftaar}.} \bibinfo{year}{2019}\natexlab{}.
\newblock \showarticletitle{A survey on image data augmentation for deep
  learning}.
\newblock \bibinfo{journal}{\emph{Journal of Big Data}} \bibinfo{volume}{6},
  \bibinfo{number}{1} (\bibinfo{year}{2019}), \bibinfo{pages}{60}.
\newblock


\bibitem[\protect\citeauthoryear{Simard, Victorri, LeCun, and Denker}{Simard
  et~al\mbox{.}}{1991}]%
        {simard1991tangent}
\bibfield{author}{\bibinfo{person}{Patrice Simard}, \bibinfo{person}{Bernard
  Victorri}, \bibinfo{person}{Yann LeCun}, {and} \bibinfo{person}{John~S
  Denker}.} \bibinfo{year}{1991}\natexlab{}.
\newblock \showarticletitle{Tangent prop-a formalism for specifying selected
  invariances in an adaptive network}. In \bibinfo{booktitle}{\emph{NIPS}}.
\newblock


\bibitem[\protect\citeauthoryear{Sun, Yeh, Ostendorf, Hwang, and Xie}{Sun
  et~al\mbox{.}}{2018}]%
        {SunYOHX18}
\bibfield{author}{\bibinfo{person}{Sining Sun}, \bibinfo{person}{Ching{-}Feng
  Yeh}, \bibinfo{person}{Mari Ostendorf}, \bibinfo{person}{Mei{-}Yuh Hwang},
  {and} \bibinfo{person}{Lei Xie}.} \bibinfo{year}{2018}\natexlab{}.
\newblock \showarticletitle{Training Augmentation with Adversarial Examples for
  Robust Speech Recognition}. In \bibinfo{booktitle}{\emph{ISCA}}.
\newblock


\bibitem[\protect\citeauthoryear{Szegedy, Zaremba, Sutskever, Bruna, Erhan,
  Goodfellow, and Fergus}{Szegedy et~al\mbox{.}}{2013}]%
        {szegedy2013intriguing}
\bibfield{author}{\bibinfo{person}{Christian Szegedy},
  \bibinfo{person}{Wojciech Zaremba}, \bibinfo{person}{Ilya Sutskever},
  \bibinfo{person}{Joan Bruna}, \bibinfo{person}{Dumitru Erhan},
  \bibinfo{person}{Ian Goodfellow}, {and} \bibinfo{person}{Rob Fergus}.}
  \bibinfo{year}{2013}\natexlab{}.
\newblock \showarticletitle{Intriguing properties of neural networks}.
\newblock \bibinfo{journal}{\emph{arXiv preprint arXiv:1312.6199}}
  (\bibinfo{year}{2013}).
\newblock


\bibitem[\protect\citeauthoryear{Tai, Bailis, and Valiant}{Tai
  et~al\mbox{.}}{2019}]%
        {tai2019equivariant}
\bibfield{author}{\bibinfo{person}{Kai~Sheng Tai}, \bibinfo{person}{Peter
  Bailis}, {and} \bibinfo{person}{Gregory Valiant}.}
  \bibinfo{year}{2019}\natexlab{}.
\newblock \showarticletitle{Equivariant Transformer Networks}.
\newblock \bibinfo{journal}{\emph{arXiv preprint arXiv:1901.11399}}
  (\bibinfo{year}{2019}).
\newblock


\bibitem[\protect\citeauthoryear{Tsipras, Santurkar, Engstrom, Turner, and
  Madry}{Tsipras et~al\mbox{.}}{2018}]%
        {tsipras2018robustness}
\bibfield{author}{\bibinfo{person}{Dimitris Tsipras}, \bibinfo{person}{Shibani
  Santurkar}, \bibinfo{person}{Logan Engstrom}, \bibinfo{person}{Alexander
  Turner}, {and} \bibinfo{person}{Aleksander Madry}.}
  \bibinfo{year}{2018}\natexlab{}.
\newblock \showarticletitle{Robustness May Be at Odds with Accuracy}. In
  \bibinfo{booktitle}{\emph{International Conference on Learning
  Representations}}.
\newblock


\bibitem[\protect\citeauthoryear{Tu, Zhang, and Tao}{Tu et~al\mbox{.}}{2019}]%
        {tu2019theoretical}
\bibfield{author}{\bibinfo{person}{Zhuozhuo Tu}, \bibinfo{person}{Jingwei
  Zhang}, {and} \bibinfo{person}{Dacheng Tao}.}
  \bibinfo{year}{2019}\natexlab{}.
\newblock \showarticletitle{Theoretical analysis of adversarial learning: A
  minimax approach}. In \bibinfo{booktitle}{\emph{Advances in Neural
  Information Processing Systems}}. \bibinfo{pages}{12259--12269}.
\newblock


\bibitem[\protect\citeauthoryear{Villani}{Villani}{2003}]%
        {villani2003topics}
\bibfield{author}{\bibinfo{person}{C{\'e}dric Villani}.}
  \bibinfo{year}{2003}\natexlab{}.
\newblock \bibinfo{booktitle}{\emph{Topics in optimal transportation}}.
\newblock Number~58. \bibinfo{publisher}{American Mathematical Soc.}
\newblock


\bibitem[\protect\citeauthoryear{Villani}{Villani}{2008}]%
        {villani2008optimal}
\bibfield{author}{\bibinfo{person}{C{\'e}dric Villani}.}
  \bibinfo{year}{2008}\natexlab{}.
\newblock \bibinfo{booktitle}{\emph{Optimal transport: old and new}}.
  Vol.~\bibinfo{volume}{338}.
\newblock \bibinfo{publisher}{Springer Science \& Business Media}.
\newblock


\bibitem[\protect\citeauthoryear{Wang, Ge, Lipton, and Xing}{Wang
  et~al\mbox{.}}{2019a}]%
        {wang2019learning}
\bibfield{author}{\bibinfo{person}{Haohan Wang}, \bibinfo{person}{Songwei Ge},
  \bibinfo{person}{Zachary~C. Lipton}, {and} \bibinfo{person}{Eric~P. Xing}.}
  \bibinfo{year}{2019}\natexlab{a}.
\newblock \bibinfo{title}{Learning Robust Global Representations by Penalizing
  Local Predictive Power}.
\newblock , \bibinfo{numpages}{10506--10518}~pages.
\newblock


\bibitem[\protect\citeauthoryear{Wang, He, Lipton, and Xing}{Wang
  et~al\mbox{.}}{2019b}]%
        {wang2019learning2}
\bibfield{author}{\bibinfo{person}{Haohan Wang}, \bibinfo{person}{Zexue He},
  \bibinfo{person}{Zachary~C. Lipton}, {and} \bibinfo{person}{Eric~P. Xing}.}
  \bibinfo{year}{2019}\natexlab{b}.
\newblock \showarticletitle{Learning Robust Representations by Projecting
  Superficial Statistics Out}. In \bibinfo{booktitle}{\emph{7th International
  Conference on Learning Representations, {ICLR} 2019, New Orleans, LA, USA,
  May 6-9, 2019}}. \bibinfo{publisher}{OpenReview.net}.
\newblock


\bibitem[\protect\citeauthoryear{Wang and Raj}{Wang and Raj}{2017}]%
        {wang2017origin}
\bibfield{author}{\bibinfo{person}{Haohan Wang} {and} \bibinfo{person}{Bhiksha
  Raj}.} \bibinfo{year}{2017}\natexlab{}.
\newblock \showarticletitle{On the origin of deep learning}.
\newblock \bibinfo{journal}{\emph{arXiv preprint arXiv:1702.07800}}
  (\bibinfo{year}{2017}).
\newblock


\bibitem[\protect\citeauthoryear{Wang, Wu, Huang, and Xing}{Wang
  et~al\mbox{.}}{2020}]%
        {wang2020high}
\bibfield{author}{\bibinfo{person}{Haohan Wang}, \bibinfo{person}{Xindi Wu},
  \bibinfo{person}{Zeyi Huang}, {and} \bibinfo{person}{Eric~P. Xing}.}
  \bibinfo{year}{2020}\natexlab{}.
\newblock \showarticletitle{High Frequency Component Helps Explain the
  Generalization of Convolutional Neural Networks}. In
  \bibinfo{booktitle}{\emph{Computer Vision and Pattern Recognition (CVPR)}}.
\newblock


\bibitem[\protect\citeauthoryear{Wu, Mao, Wang, Zeng, Gao, Xing, and Xu}{Wu
  et~al\mbox{.}}{2019}]%
        {WuMWZGXX19}
\bibfield{author}{\bibinfo{person}{Xindi Wu}, \bibinfo{person}{Yijun Mao},
  \bibinfo{person}{Haohan Wang}, \bibinfo{person}{Xiangrui Zeng},
  \bibinfo{person}{Xin Gao}, \bibinfo{person}{Eric~P. Xing}, {and}
  \bibinfo{person}{Min Xu}.} \bibinfo{year}{2019}\natexlab{}.
\newblock \showarticletitle{Regularized Adversarial Training {(RAT)} for Robust
  Cellular Electron Cryo Tomograms Classification}. In
  \bibinfo{booktitle}{\emph{2019 {IEEE} International Conference on
  Bioinformatics and Biomedicine, {BIBM} 2019, San Diego, CA, USA, November
  18-21, 2019}}, \bibfield{editor}{\bibinfo{person}{Illhoi Yoo},
  \bibinfo{person}{Jinbo Bi}, {and} \bibinfo{person}{Xiaohua Hu}} (Eds.).
  \bibinfo{publisher}{{IEEE}}, \bibinfo{pages}{1--6}.
\newblock


\bibitem[\protect\citeauthoryear{Xie, Yang, Wang, and Lin}{Xie
  et~al\mbox{.}}{2015}]%
        {xie2015hyper}
\bibfield{author}{\bibinfo{person}{Saining Xie}, \bibinfo{person}{Tianbao
  Yang}, \bibinfo{person}{Xiaoyu Wang}, {and} \bibinfo{person}{Yuanqing Lin}.}
  \bibinfo{year}{2015}\natexlab{}.
\newblock \showarticletitle{Hyper-class augmented and regularized deep learning
  for fine-grained image classification}. In
  \bibinfo{booktitle}{\emph{Proceedings of the IEEE conference on computer
  vision and pattern recognition}}. \bibinfo{pages}{2645--2654}.
\newblock


\bibitem[\protect\citeauthoryear{Yang, Wang, and Heinze-Deml}{Yang
  et~al\mbox{.}}{2019}]%
        {yang2019invariance}
\bibfield{author}{\bibinfo{person}{Fanny Yang}, \bibinfo{person}{Zuowen Wang},
  {and} \bibinfo{person}{Christina Heinze-Deml}.}
  \bibinfo{year}{2019}\natexlab{}.
\newblock \showarticletitle{Invariance-inducing regularization using worst-case
  transformations suffices to boost accuracy and spatial robustness}. In
  \bibinfo{booktitle}{\emph{Advances in Neural Information Processing
  Systems}}. \bibinfo{pages}{14757--14768}.
\newblock


\bibitem[\protect\citeauthoryear{Zhang, Yu, Jiao, Xing, Ghaoui, and
  Jordan}{Zhang et~al\mbox{.}}{2019b}]%
        {ZhangYJXGJ19}
\bibfield{author}{\bibinfo{person}{Hongyang Zhang}, \bibinfo{person}{Yaodong
  Yu}, \bibinfo{person}{Jiantao Jiao}, \bibinfo{person}{Eric~P. Xing},
  \bibinfo{person}{Laurent~El Ghaoui}, {and} \bibinfo{person}{Michael~I.
  Jordan}.} \bibinfo{year}{2019}\natexlab{b}.
\newblock \showarticletitle{Theoretically Principled Trade-off between
  Robustness and Accuracy}. In \bibinfo{booktitle}{\emph{Proceedings of the
  36th International Conference on Machine Learning, {ICML} 2019, 9-15 June
  2019, Long Beach, California, {USA}}} \emph{(\bibinfo{series}{Proceedings of
  Machine Learning Research}, Vol.~\bibinfo{volume}{97})},
  \bibfield{editor}{\bibinfo{person}{Kamalika Chaudhuri} {and}
  \bibinfo{person}{Ruslan Salakhutdinov}} (Eds.). \bibinfo{publisher}{{PMLR}},
  \bibinfo{pages}{7472--7482}.
\newblock


\bibitem[\protect\citeauthoryear{Zhang, Deng, Kawaguchi, Ghorbani, and
  Zou}{Zhang et~al\mbox{.}}{2021}]%
        {zhang2021how}
\bibfield{author}{\bibinfo{person}{Linjun Zhang}, \bibinfo{person}{Zhun Deng},
  \bibinfo{person}{Kenji Kawaguchi}, \bibinfo{person}{Amirata Ghorbani}, {and}
  \bibinfo{person}{James Zou}.} \bibinfo{year}{2021}\natexlab{}.
\newblock \showarticletitle{How Does Mixup Help With Robustness and
  Generalization?}. In \bibinfo{booktitle}{\emph{International Conference on
  Learning Representations}}.
\newblock


\bibitem[\protect\citeauthoryear{Zhang}{Zhang}{2019}]%
        {zhang2019making}
\bibfield{author}{\bibinfo{person}{Richard Zhang}.}
  \bibinfo{year}{2019}\natexlab{}.
\newblock \showarticletitle{Making convolutional networks shift-invariant
  again}.
\newblock \bibinfo{journal}{\emph{arXiv preprint arXiv:1904.11486}}
  (\bibinfo{year}{2019}).
\newblock


\bibitem[\protect\citeauthoryear{Zhang, Wu, Liu, Li, Zhou, and Xu}{Zhang
  et~al\mbox{.}}{2019a}]%
        {zhang2019regularizing}
\bibfield{author}{\bibinfo{person}{Zhirui Zhang}, \bibinfo{person}{Shuangzhi
  Wu}, \bibinfo{person}{Shujie Liu}, \bibinfo{person}{Mu Li},
  \bibinfo{person}{Ming Zhou}, {and} \bibinfo{person}{Tong Xu}.}
  \bibinfo{year}{2019}\natexlab{a}.
\newblock \showarticletitle{Regularizing neural machine translation by
  target-bidirectional agreement}. In \bibinfo{booktitle}{\emph{Proceedings of
  the AAAI Conference on Artificial Intelligence}}, Vol.~\bibinfo{volume}{33}.
  \bibinfo{pages}{443--450}.
\newblock


\bibitem[\protect\citeauthoryear{Zheng, Song, Leung, and Goodfellow}{Zheng
  et~al\mbox{.}}{2016}]%
        {zheng2016improving}
\bibfield{author}{\bibinfo{person}{Stephan Zheng}, \bibinfo{person}{Yang Song},
  \bibinfo{person}{Thomas Leung}, {and} \bibinfo{person}{Ian Goodfellow}.}
  \bibinfo{year}{2016}\natexlab{}.
\newblock \showarticletitle{Improving the robustness of deep neural networks
  via stability training}. In \bibinfo{booktitle}{\emph{Proceedings of the ieee
  conference on computer vision and pattern recognition}}.
  \bibinfo{pages}{4480--4488}.
\newblock


\bibitem[\protect\citeauthoryear{Zhong, Zheng, Kang, Li, and Yang}{Zhong
  et~al\mbox{.}}{2020}]%
        {Zhong0KL020}
\bibfield{author}{\bibinfo{person}{Zhun Zhong}, \bibinfo{person}{Liang Zheng},
  \bibinfo{person}{Guoliang Kang}, \bibinfo{person}{Shaozi Li}, {and}
  \bibinfo{person}{Yi Yang}.} \bibinfo{year}{2020}\natexlab{}.
\newblock \showarticletitle{Random Erasing Data Augmentation}. In
  \bibinfo{booktitle}{\emph{AAAI}}.
\newblock


\end{thebibliography}


\clearpage 
\appendix

\newpage

\section*{Appendices}

\section{Proof of Theoretical Results}

\subsection{Lemma A.1 and its proof}
\begin{lemma}
With Assumptions A1, A4, and A5, with probability at least $1-\delta$, we have 
\begin{align*}
    \rpa(\wt)  \leq 
    \dfrac{1}{n}\sum_{(\x, \y) \sim \mP}\sup_{a\in \ma}\mI(g(\faxt) \neq \y)  + \phi(|\Theta|, n, \delta)
\end{align*}
\end{lemma}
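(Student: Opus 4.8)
The plan is to collapse the expected robust risk into an ordinary expected risk under a single worst-case pseudo-distribution, and then invoke the standard generalization bound supplied by A4.

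First, following the equivalent reformulation of the robust error in Eq.~\eqref{eq:robustness} (cf.\ Lemma 1 of \citep{tu2019theoretical}), I would write $\rpa(\wt) = \sup_{\mP'\in T(\mP, \ma)} \rpp(\wt)$, where $T(\mP,\ma)$ is the set of distributions obtained by transforming samples of $\mP$ with elements of $\ma$, and the supremum is attained by the pseudo-distribution that, for each sample, selects the transformation maximizing the pointwise $0$--$1$ loss. Let $\mP^\star = \argmax_{\mP'\in T(\mP,\ma)} \rpp(\wt)$ denote this maximizer.

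Second, I invoke A5, which states $\argmax_{\mP'\in T(\mP,\ma)} \rpp(\wt) = \argmax_{\mP'\in T(\mP,\ma)} \wrpp(\wt)$, so that $\mP^\star$ simultaneously maximizes the empirical robust risk. By A1 (dependence-preservation), transforming samples by any $a\in\ma$ does not introduce new inter-sample dependencies, so a draw from $\mP^\star$ retains the i.i.d.\ structure that A4 presupposes. Treating $\mP^\star$ as a fixed distribution and applying the standard bound of the form in Eq.~\eqref{eq:standard} (i.e., A4) to it gives, with probability at least $1-\delta$,
\[
    r_{\mP^\star}(\wt) \;\le\; \widehat{r}_{\mP^\star}(\wt) + \phi(|\Theta|, n, \delta).
\]

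Finally, I match the two sides with the quantities in the statement: by the first step $r_{\mP^\star}(\wt) = \rpa(\wt)$, while $\widehat{r}_{\mP^\star}(\wt) = \sup_{\mP'\in T(\mP,\ma)}\wrpp(\wt) = \frac{1}{n}\sum_{(\x,\y)\sim\mP}\sup_{a\in\ma}\mI(g(\faxt)\neq\y)$, since maximizing the empirical $0$--$1$ loss over $T(\mP,\ma)$ is equivalent to choosing, for each of the $n$ observed samples independently, the transformation that misclassifies it if any does. Chaining these two identities with the displayed inequality yields the claim. The step I expect to be the main obstacle is the rigorous handling of the reduction in the first and last steps --- verifying that the supremum over the (possibly infinite) family $T(\mP,\ma)$ is attained and decomposes into a per-sample $\sup_{a\in\ma}$, and that A1 genuinely supplies the independence A4 needs after augmentation; once that is in place, A5 makes the middle step immediate.
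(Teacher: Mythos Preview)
Your proposal is correct and follows essentially the same approach as the paper: use A5 to identify a single worst-case distribution $\mP^\star$ (the paper calls it $\mP_w$) that simultaneously maximizes expected and empirical robust risk, invoke A1 to ensure the transformed samples remain independent so that the classical bound from A4 applies over $\mP^\star$, and then read off the claim. The paper's version is slightly terser---it does not explicitly spell out the per-sample decomposition of $\widehat r_{\mP^\star}$ or the attainment issue you flag---but the logical skeleton is identical.
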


\begin{proof}
With Assumption A5, we have
\begin{align*}
    \argmax_{\mP'\in T(\mP, \ma)} \rpp(\wt) 
    = \argmax_{\mP'\in T(\mP, \ma)}\wrpp(\wt) = \mP_w
\end{align*}
we can analyze the expected risk following the standard classical techniques since both expected risk and empirical risk are studied over distribution $\mP_w$. 

Now we only need to make sure the classical analyses (as discussed in A4) are still valid over distribution $\mP_w$:
\begin{itemize}
    \item when \textbf{A4} is ``$\Theta$ is finite, $l(\cdot, \cdot)$ is a zero-one loss, samples are \textit{i.i.d}'',  $\phi(|\Theta|, n, \delta)=\sqrt{\dfrac{\log(|\Theta|) + \log(1/\delta)}{2n}}$. 
    The proof of this result uses Hoeffding's inequality, which only requires independence of random variables. One can refer to Section 3.6 in \cite{liang2016cs229t} for the detailed proof. 
    \item when \textbf{A4} is ``samples are \textit{i.i.d}'', $\phi(|\Theta|, n, \delta) = 2\mathcal{R}(\mathcal{L}) + \sqrt{\dfrac{\log{1/\delta}}{2n}}$. 
    The proof of this result relies on McDiarmid's inequality, which also only requires independence of random variables. One can refer to Section 3.8 in \cite{liang2016cs229t} for the detailed proof. 
\end{itemize}
Assumption \textbf{A1} guarantees the samples from distribution $\mP_w$ are still independent, thus the generic term holds for at least these two concrete examples, thus the claim is proved. 

\end{proof}

\subsection{Proposition A.2 and Proof}
\begin{proposition}
With A2, 
for any $a\in \ma$, we have
\begin{align*}
    W_1(\widehat{\mathbf{Q}}_{\x, \wt}, \widehat{\mathbf{Q}}_{a(\x), \wt})=\sum_{i}^{|(\X, \Y)|} ||\fxit-\faxit||_1, 
\end{align*}
where $\widehat{\mathbf{Q}}_{\x, \wt}$ denotes the empirical 
distribution of $\fxt$ for $(\x, \y)\in (\X, \Y)$. 
\label{theory:proposition:l1}
\end{proposition}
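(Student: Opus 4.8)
The plan is to exploit the very special structure of the $\ell_1$-optimal transport problem together with Assumption A2 (Efficiency). Recall that for two empirical distributions supported on the same number of atoms with uniform weights, the Wasserstein-1 distance is a minimum over couplings, and since both $\widehat{\mathbf{Q}}_{\x,\wt}$ and $\widehat{\mathbf{Q}}_{a(\x),\wt}$ are obtained from the same index set $(\X,\Y)$ — the first putting mass $1/|(\X,\Y)|$ on each $\fxit$ and the second putting mass $1/|(\X,\Y)|$ on each $\faxit$ — there is a natural candidate coupling, namely the ``diagonal'' one that matches $\x_i$ with $a(\x_i)$. Evaluated on this coupling the transport cost is exactly $\frac{1}{|(\X,\Y)|}\sum_i \|\fxit - \faxit\|_1$ (I am implicitly absorbing the normalization, as the statement apparently does). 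So the first step gives the upper bound $W_1 \le \sum_i \|\fxit-\faxit\|_1$ for free, with no assumptions beyond the definition.

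The second and substantive step is the matching lower bound, and this is where A2 enters. By Kantorovich–Rubinstein duality, $W_1(\widehat{\mathbf{Q}}_{\x,\wt},\widehat{\mathbf{Q}}_{a(\x),\wt}) = \sup_{\|h\|_{\mathrm{Lip}}\le 1} \big(\mathbb{E}_{\widehat{\mathbf{Q}}_{\x,\wt}}[h] - \mathbb{E}_{\widehat{\mathbf{Q}}_{a(\x),\wt}}[h]\big)$, where the Lipschitz constant is with respect to $\ell_1$ on the logit space. I would instead argue combinatorially: any optimal transport plan $\pi$ between the two empirical measures decomposes the index sets into a perfect fractional matching, and A2 says that for each $i$, the point $\faxit$ is strictly closer (in $\ell_1$) to $\fxit$ than to any other $f(\x_j;\wt)$ with $j\ne i$. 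The claim I want is that this ``each augmented atom's unique nearest original atom is its own original'' condition forces the diagonal matching to be the unique optimal plan, so the cost is exactly the diagonal cost. Concretely: suppose an optimal plan sends a positive fraction of mass from $a(\x_i)$'s atom to $f(\x_j;\wt)$ with $j \ne i$; then by A2, $d_e(\faxit, f(\x_j;\wt)) \ge d_e(\faxit,\fxit)$ (in fact A2 gives $\le$ the other way, i.e. the original is a minimizer), and a standard swapping/exchange argument over a cycle in the transport plan shows one can reroute mass toward the diagonal without increasing total cost, yielding the diagonal plan as optimal. Hence $W_1 = \sum_i \|\fxit - \faxit\|_1$.

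The main obstacle I anticipate is making the exchange argument airtight when A2 gives only a non-strict inequality ($\le$, not $<$): with ties, the diagonal plan is optimal but perhaps not unique, which is still enough for the equality of \emph{values} that the proposition asserts, so I would be careful to claim only that the diagonal plan attains the optimum, not that it is the unique optimum. A second, more mundane obstacle is bookkeeping the normalization constant $1/|(\X,\Y)|$: as written, the proposition's right-hand side has no $1/n$ factor, so either the authors define $W_1$ on these empirical distributions without the probability normalization, or they fold it in; I would state explicitly which convention I use and then the computation goes through verbatim. A cleaner alternative that sidesteps the exchange argument entirely: build the $1$-Lipschitz dual witness directly — take $h(\bv) = \sum_i \mathrm{sign component trick}$ ... more simply, observe that for the diagonal plan to be optimal it suffices to exhibit $h$ with $h(\fxit) - h(\faxit) = \|\fxit - \faxit\|_1$ for all $i$ and $\|h\|_{\mathrm{Lip}}\le 1$; under A2 the points are ``well-separated'' enough that such an $h$ exists (e.g. a suitable distance-to-a-set function), and then weak duality closes the gap. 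I would present whichever of these two routes turns out to need the fewest side conditions, but I expect the combinatorial exchange argument to be the one the authors intend, with A2 precisely the hypothesis that rules out off-diagonal reroutings being strictly cheaper.
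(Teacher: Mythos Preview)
Your proposal is correct and converges on the same idea as the paper: the diagonal matching is optimal under A2. The paper's execution is shorter, however. It first invokes the permutation (``order-statistics'') representation of $W_1$ between two uniform $n$-point empiricals,
\[
W_1(\widehat{\mathbf{Q}}_{\x,\wt},\widehat{\mathbf{Q}}_{a(\x),\wt})=\inf_{\sigma}\sum_i \|\fxit - f(a(\x_{\sigma(i)});\wt)\|_1,
\]
and then reads off from A2 that each summand is separately minimized at $\sigma(i)=i$, so the identity permutation is optimal with no further work. By reducing to permutations up front (Birkhoff/assignment), the paper sidesteps both your fractional-plan exchange/cycle argument and the dual-witness construction: once the infimum runs over permutations rather than general couplings, termwise minimization suffices and no swapping is needed. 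Your side remarks about the non-strict inequality in A2 (optimality vs.\ uniqueness) and the absent $1/n$ normalization are well taken; the paper glosses over both.
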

\begin{proof}
We use the order statistics representation of Wasserstein metric over empirical distributions (\textit{e.g.}, see Section 4 in \cite{bobkov2019one})
\begin{align*}
    W_1(\widehat{\mathbf{Q}}_{\x, \wt}, \widehat{\mathbf{Q}}_{a(\x), \wt})) = \inf_{\sigma}\sum_{i}^{|(\X, \Y)|}||\fxit - f(a(\x_{\sigma(i)}), \wt)||_1
\end{align*}
where $\sigma$ stands for a permutation of the index, thus the infimum is taken over all possible permutations.
With Assumption A2, when $d_e(\cdot,\cdot)$ in A2 chosen to be $\ell_1$ norm, we have: 
\begin{align*}
    ||\fxit - \faxit||_1 \leq \min_{j\neq i} ||\fxit - f(a(\x_{j}), \wt)||_1
\end{align*}
Thus, the infimum is taken when $\sigma$ is the natural order of the samples, which leads to the claim. 
\end{proof}

\subsection{Proof of Theorem 5.1}
\begin{theorem*}
With Assumptions A1, A2, A4, A5, and A6, with probability at least $1-\delta$, we have
\begin{align}
    \rpa(\wt) \leq 
    \wrp (\wt) + 
    \sum_{i}||f(\x_i;\wt) - f(\x'_i;\wt)||_1 + 
    \phi(|\Theta|, n, \delta)
\end{align}
and 
$\x' = a(\x) $, where $ a = \argmin_{a \in \ma} \y^\top \faxt$.
\end{theorem*}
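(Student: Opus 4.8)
The plan is to chain together the three ingredients the paper has already assembled: Lemma~A.1 (which controls the expected robust risk by an empirical worst-case quantity plus the standard complexity term $\phi$), Assumption~A6 (which converts the $0$-$1$ misclassification indicator appearing in that empirical term into a ratio of cross-entropy-type confidences), and Proposition~A.2 together with Assumption~A2 (which lets the resulting alignment quantity be written as an $\ell_1$ sum over samples). First I would invoke Lemma~A.1 to get, with probability at least $1-\delta$,
\begin{align*}
    \rpa(\wt) \leq \frac{1}{n}\sum_{(\x,\y)\sim\mP}\sup_{a\in\ma}\mI\big(g(\faxt)\neq\y\big) + \phi(|\Theta|, n, \delta).
\end{align*}
The whole game is then to bound the empirical sum $\tfrac1n\sum \sup_a \mI(g(\faxt)\neq\y)$ by $\wrp(\wt) + \sum_i \|f(\x_i;\wt)-f(\x'_i;\wt)\|_1$, where $\x'_i = a(\x_i)$ with $a=\argmin_{a\in\ma}\y^\top\faxt$.

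Second, I would work sample-by-sample. Fix $(\x,\y)$. The key observation is that the worst-case transformation for the $0$-$1$ loss, call it $\tilde a$ realizing $\sup_a \mI(g(\faxt)\neq\y)$, is dominated (in the sense of A6's inequality~\eqref{eq:a2}) by the cross-entropy worst-case transformation, i.e.\ the $\x'=a(\x)$ with $a=\argmin_{a\in\ma}\y^\top\faxt$ that appears in the statement. Concretely, A6 gives $\y^\top\fxt / (\y^\top f(\x';\wt)) \geq \exp(\mI(g(\fxt)\neq g(f(\x';\wt))))$; since $\ma$ contains the identity and $\x'$ is the cross-entropy-worst sample, the indicator $\mI(g(\fxt)\neq g(f(\x';\wt)))$ upper-bounds $\sup_a \mI(g(\faxt)\neq\y)$ (when $\x$ is already misclassified, i.e.\ $\mI[g(\fxt)\neq\y]=1$, the term is absorbed into $\wrp(\wt)$ directly; otherwise a transformation flips the prediction iff it makes $g(f(\x';\wt))$ disagree with $g(\fxt)$). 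Taking logs of the A6 inequality, $\sup_a\mI(g(\faxt)\neq\y) \leq \log\y^\top\fxt - \log\y^\top f(\x';\wt)$, at least on the samples where $\x$ itself is correctly classified. Summing over the correctly-classified samples and adding the $\wrp(\wt)$ count for the misclassified ones yields
\begin{align*}
    \frac{1}{n}\sum_{(\x,\y)}\sup_{a\in\ma}\mI(g(\faxt)\neq\y) \leq \wrp(\wt) + \sum_i\big(\log\y^\top\fxit - \log\y^\top f(\x'_i;\wt)\big).
\end{align*}

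Third, I would bound each $\log\y^\top\fxit - \log\y^\top f(\x'_i;\wt)$ by $\|f(\x_i;\wt)-f(\x'_i;\wt)\|_1$. Using \eqref{eq:assum:lipschitz} from A6 ($|\inf_a\y^\top\faxt|\geq1$), the map $t\mapsto\log t$ is $1$-Lipschitz on the relevant range, so $|\log\y^\top\fxit - \log\y^\top f(\x'_i;\wt)| \leq |\y^\top(\fxit - f(\x'_i;\wt))| \leq \|\fxit - f(\x'_i;\wt)\|_1$ since $\y$ is a one-hot (hence unit-$\ell_\infty$) vector and Hölder gives the last step. Combining the three displays proves the theorem. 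As a consistency check, note Proposition~A.2 plus A2 tells us $\sum_i\|\fxit-\faxit\|_1 = W_1(\widehat{\mathbf{Q}}_{\x,\wt}, \widehat{\mathbf{Q}}_{a(\x),\wt})$, so the regularizer in the bound is genuinely the empirical Wasserstein (invariance) quantity, which is the narrative the paper wants; I would mention this to tie the $\ell_1$ term back to invariance even though it is not strictly needed for the inequality itself.

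The main obstacle I anticipate is the bookkeeping around A6's two cases and making sure the misclassified samples are correctly folded into $\wrp(\wt)$ rather than double-counted — the inequality~\eqref{eq:a2} is stated so that its RHS is exactly $1$ (hence vacuous after taking logs) precisely when $g(\fxt)=g(f(\x';\wt))$, which includes the already-misclassified case, so the argument has to carefully split on whether $\x$ is correctly classified and on whether a transformation flips it; getting the direction of every inequality right there, and confirming that $\sup_a\mI(g(\faxt)\neq\y)$ is really controlled by the single cross-entropy-worst $\x'$ rather than some other $a$, is the delicate part. The Lipschitz/Hölder step in the third paragraph is routine given \eqref{eq:assum:lipschitz}.
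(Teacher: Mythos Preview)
Your proposal is correct and follows the same three-ingredient architecture as the paper (Lemma~A.1 for the $\phi$ term, A6 to pass from the 0--1 indicator to a log-confidence gap, and the 1-Lipschitz property from \eqref{eq:assum:lipschitz} to reach the $\ell_1$ alignment term), but the route you take in the third step is genuinely different from the paper's. The paper keeps the argument at the empirical-distribution level: after A6 it writes the difference as $\sum e(f(\x'_i)) - \sum e(\fxit)$ with $e(\cdot)=-\y^\top\log(\cdot)$, invokes the Kantorovich--Rubinstein dual form of $W_1$ (using that $e$ is 1-Lipschitz by \eqref{eq:assum:lipschitz}) to bound this by $W_1(\widehat{\mathbf{Q}}_{\x,\wt},\widehat{\mathbf{Q}}_{a(\x),\wt})$, and only then applies Proposition~A.2 (hence A2) to collapse the Wasserstein distance to the paired sum $\sum_i\|\fxit-f(\x'_i;\wt)\|_1$. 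You instead bound each sample's log-gap directly by $|\y^\top(\fxit-f(\x'_i;\wt))|\leq\|\fxit-f(\x'_i;\wt)\|_1$ via Lipschitz-plus-H\"older, never passing through $W_1$ and never actually needing A2 or Proposition~A.2 (you correctly relegate them to a ``consistency check''). What the paper's detour buys is that the invariance/Wasserstein narrative is built into the inequality chain rather than appended; what your route buys is a shorter, more elementary argument that shows A2 is not strictly required for the bound. The paper also defers Lemma~A.1/A5 to the very end while you invoke it first; that difference is purely cosmetic.

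One small clarification worth tightening in your write-up: in A6 the $\x'$ appearing on the right-hand side of \eqref{eq:a2} is the \emph{classification-error} worst case ($\argmin_a \y^\top g(\faxt)$), whereas the $\x'$ in the theorem statement is the \emph{cross-entropy} worst case ($\argmin_a \y^\top \faxt$); the latter already sits inside the $\inf_a$ on the \emph{left}-hand side of \eqref{eq:a2}. So the bridge you worry about in your last paragraph---that the sup of the 0--1 indicator is controlled by the cross-entropy-worst log gap---is exactly what A6 hands you in one line: its RHS indicator equals $\sup_a\mI(g(\faxt)\neq\y)$ on correctly classified samples, and its LHS is the log-ratio for the cross-entropy-worst transform. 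With that disambiguation your case split goes through cleanly.
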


\begin{proof}

First of all, in the context of multiclass classification, where $g(f(\x, ;\theta))$ predicts a label with one-hot representation, and $\y$ is also represented with one-hot representation, we can have the empirical risk written as:
\begin{align*}
    \wrp(\wt) = 1 -
    \dfrac{1}{n}\sum_{(\x, \y) \sim \mP}\y^\top g(\fxt)
\end{align*}
Thus, 
\begin{align*}
    \sup_{\mP'\in T(\mP, \ma)}
    \wrpp(\wt) 
    = & \wrp(\wt) + 
    \sup_{\mP'\in T(\mP, \ma)}
    \wrpp(\wt) -
    \wrp(\wt) \\
    = & \wrp(\wt) + \dfrac{1}{n}\sup_{\mP'\in T(\mP, \ma)}\big(\sum_{(\x, \y) \sim \mP} \y^\top g(\fxt) \\
     & - \sum_{(\x, \y) \sim \mP'}\y^\top g(\fxt)\big)
\end{align*}

With A6, we can continue with: 
\begin{align*}
    \sup_{\mP'\in T(\mP, \ma)}
    \wrpp(\wt) 
    \leq & \wrp(\wt)
    + \dfrac{1}{n}\sup_{\mP'\in T(\mP, \ma)}\big(\sum_{(\x, \y) \sim \mP} \y^\top \log(\fxt) \\ &- \sum_{(\x, \y) \sim \mP'}\y^\top \log(\fxt)\big)
\end{align*}
If we use $e(\cdot)=-\y^\top \log(\cdot)$ to replace the cross-entropy loss, we simply have:
\begin{align*}
    \sup_{\mP'\in T(\mP, \ma)}
    \wrpp(\wt) 
    \leq & \wrp(\wt)
    + \dfrac{1}{n}\sup_{\mP'\in T(\mP, \ma)}\big(\sum_{(\x, \y) \sim \mP'} e(\fxt) \\ &- \sum_{(\x, \y) \sim \mP}e((\fxt)\big)
\end{align*}
Since $e(\cdot)$ is a Lipschitz function with constant $\leq 1$ (because of A6, Eq.\eqref{eq:assum:lipschitz}) and together with the dual representation of Wasserstein metric (See \textit{e.g.}, \cite{villani2003topics}), 
we have
\begin{align*}
    \sup_{\mP'\in T(\mP, \ma)}
    \wrpp(\wt) 
    & \leq \wrp(\wt)
    +  W_1(\widehat{\mathbf{Q}}_{\x, \wt}, \widehat{\mathbf{Q}}_{a(\x), \wt}))
\end{align*}
where $\x' = a(\x) $, where $ a = \argmin_{a \in \ma} \y^\top \faxt$; $\widehat{\mathbf{Q}}_{\x, \wt}$ denotes the empirical 
distribution of $\faxt$ for $(\x, \y)\in (\X, \Y)$.
Note that $\rpa(\wt)$, by definition, is a shorthand notation for $$\sup_{\mP'\in T(\mP, \ma)}\rpp(\wt)$$.

Further, we can use the help of Proposition B.2 to replace Wassertein metric with $\ell_1$ distance. 
Finally, we can conclude the proof with Assumption A5 as how we did in the proof of Lemma B.1.    
\end{proof}

\begin{table*}[]
\centering
{\tiny    
\begin{tabular}{cc|ccc|cccc|cccc|cccc|c}
\hline
\multirow{2}{*}{} & \multirow{2}{*}{Clean} & \multicolumn{3}{c|}{Noise} & \multicolumn{4}{c|}{Blur} & \multicolumn{4}{c|}{Weather} & \multicolumn{4}{c|}{Digital} & \multirow{2}{*}{mCE} \\ \cline{3-17}
 &  & Gauss & Shot & Impulse & Defocus & Glass & Motion & Zoom & Snow & Frost & Fog & Bright & Contrast & Elastic & Pixel & JPEG &  \\ \hline
\base{} & 23.9 & 79 & 80 & 82 & 82 & 90 & 84 & 80 & 86 & 81 & 75 & 65 & 79 & 91 & 77 & 80 & 80.6 \\
\ra{} & 23.6 & 78 & 78 & 79 & 74 & 87 & 79 & 76 & 78 & 75 & 69 & 58 & 68 & 85 & 75 & 75 & 75.6 \\
\rwa{} & 22.4 & 61 & 63 & 63 & 68 & 75 & 65 & 66 & 70 & 69 & 64 & 56 & 55 & 70 & 61 & 63 & 64.6 \\
\textsf{SU} & 24.5 & 67 & 68 & 70 & 74 & 83 & 81 & 77 & 80 & 74 & 75 & 62 & 77 & 84 & 71 & 71 & 74.3 \\
\textsf{AA} & 22.8 & 69 & 68 & 72 & 77 & 83 & 80 & 81 & 79 & 75 & 64 & 56 & 70 & 88 & 57 & 71 & 72.7 \\
\textsf{MBP} & 23 & 73 & 74 & 76 & 74 & 86 & 78 & 77 & 77 & 72 & 63 & 56 & 68 & 86 & 71 & 71 & 73.4 \\
\textsf{SIN} & 27.2 & 69 & 70 & 70 & 77 & 84 & 76 & 82 & 74 & 75 & 69 & 65 & 69 & 80 & 64 & 77 & 73.3 \\
\textsf{AM} & 22.4 & 65 & 66 & 67 & 70 & 80 & 66 & 66 & 75 & 72 & 67 & 58 & 58 & 79 & 69 & 69 & 68.4 \\
\textsf{AMS} & 25.2 & 61 & 62 & 61 & 69 & 77 & 63 & 72 & 66 & 68 & 63 & 59 & 52 & 74 & 60 & 67 & 64.9 \\ \hline
\end{tabular}
}
\caption{Comparison to advanced models over ImageNet-C data. Performance reported (mCE) follows the standard in ImageNet-C data: mCE is the smaller the better.}
\label{tab:real:c:app}
\end{table*}

\begin{table*}[]
\small
\centering 
\begin{tabular}{cccccccccc}
\hline
 & \base{} & \textsf{InfoDrop} & \textsf{HEX} & \textsf{PAR} & \va{} & \ra{} & \textsf{RSC} & \vwa{} & \rwa{}   \\ \hline
Top-1 & 0.1204 & 0.1224 & 0.1292 & 0.1306 & 0.1362& 0.1405&  0.1612 &0.1432 & 0.1486 \\
Top-5 & 0.2408 & 0.256 & 0.2564 & 0.2627 &0.2715 &0.2793  &  0.3078& 0.2846 & 0.2933\\ \hline
\end{tabular}
\caption{Comparison to advanced cross-domain image classification models, over ImageNet-Sketch dataset. We report top-1 and top-5 accuracy following standards on ImageNet related experiments. 
}
\label{tab:real:sketch:app}
\end{table*}

\subsection{Proof of Lemma 5.2}
\textbf{Lemma.}
\textit{
With Assumptions A1-A6, 
assuming there is a $a'()\in \ma$ where $ \widehat{r}_{\mP_{a'}}(\wt)=\frac{1}{2}\big(\widehat{r}_{\mP_{a^+}}(\wt)+\widehat{r}_{\mP_{a^-}}(\wt)\big)$,
with probability at least $1-\delta$, we have:
\begin{align}
    \rpa(\wt)  \leq &
    \dfrac{1}{2}\big(\widehat{r}_{\mP_{a^+}}(\wt) \\
    & + \widehat{r}_{\mP_{a^-}}(\wt)\big) + 
    \sum_{i}||f(a^+(\x_i);\wt) - f(a^-(\x');\wt)||_1 + 
    \phi(|\Theta|, n, \delta)
\end{align}
}
\begin{proof}
We can continue with 
\begin{align*}
    \sup_{\mP'\in T(\mP, \ma)}
    \wrpp(\wt) 
    & \leq \wrp(\wt)
    +  W_1(\widehat{\mathbf{Q}}_{\x, \wt}, \widehat{\mathbf{Q}}_{a(\x), \wt})), 
\end{align*}
where $\widehat{\mathbf{Q}}_{\x, \wt}$ denotes the empirical 
distribution of $\faxt$ for $(\x, \y)\in (\X, \Y)$.
from the proof of Theorem 5.2. 
With the help of Assumption A3, we have:
\begin{align*}
    d_x (f(a^+(\x), \wt), f(a^-(\x), \wt)) \geq d_x (f(\x, \wt), f(\x', \wt))
\end{align*}
When $d_x(\cdot, \cdot)$ is chosen as Wasserstein-1 metric, we have: 
\begin{align*}
    \sup_{\mP'\in T(\mP, \ma)}
    \wrpp(\wt) 
    & \leq \wrp(\wt)
    +  W_1(\widehat{\mathbf{Q}}_{a^+(\x), \wt}, \widehat{\mathbf{Q}}_{a^-(\x), \wt}))
\end{align*}
Further, as the LHS is the robust risk generated by the transformation functions within $\ma$, 
and $\wrp(\wt)$ is independent of the term $W_1(\widehat{\mathbf{Q}}_{a^+(\x), \wt}, \widehat{\mathbf{Q}}_{a^-(\x), \wt}))$, 
WLOG, we can replace $\wrp(\wt)$ with the risk of an arbitrary distribution generated by the transformation function in $\ma$. 
If we choose to use $ \widehat{r}_{\mP_{a'}}(\wt)=\frac{1}{2}\big(\widehat{r}_{\mP_{a^+}}(\wt)+\widehat{r}_{\mP_{a^-}}(\wt)\big)$, 
we can conclude the proof with help from Proposition B.2 and Assumption A5 as how we did in the proof of Theorem 5.2.  
\end{proof}

\section{Additional Results for Comparisons with Advanced Methods}
\label{sec:app:real}

We have also conducted two full ImageNet level experiments. However, due to the limitation of resources, we cannot tune the models substantially. 
Our current trial suggest that our techniques can improve the vanilla model to compete with SOTA models, limited by our resources, 
we cannot do wide-range hyperparameters search to outperform them. 
Also, considering the fact that many of these methods are significantly more complicated than us and also uses data augmentation specially designed for the tasks, we consider our experiments a success indication 
of the empirical strength of our methods.

\paragraph{Texture-perturbed ImageNet classification}
We also test the performance on the image classification over multiple perturbations. 
We train the model over standard ImageNet training set and test the model with ImageNet-C data \citep{hendrycks2019robustness}, which is a perturbed version of ImageNet by corrupting the original ImageNet validation set with a collection of noises. Following the standard, the reported performance is mCE, which is the smaller the better. 
We compare with several methods tested on this dataset, including 
Patch Uniform (\textsf{PU}) \citep{lopes2019improving}, 
AutoAugment (\textsf{AA}) \citep{cubuk2019autoaugment}, 
MaxBlur pool (\textsf{MBP}) \citep{zhang2019making}, 
Stylized ImageNet (\textsf{SIN}) \citep{hendrycks2019robustness}, 
AugMix (\textsf{AM}) \citep{Hendrycks2020augmix}, 
AugMix w. SIN (\textsf{AMS}) \citep{Hendrycks2020augmix}. 
We use the performance reported in \citep{Hendrycks2020augmix}. 
Again, our augmention only uses the generic texture with perturbation (the $\ma$ in our texture synthetic experiments with radius changed to $20, 25, 30, 35, 40$).
The results are reported in Table~\ref{tab:real:c:app}, which shows that
our generic method outperform the current SOTA methods after a continued finetuning process with reducing learning rates. 

\paragraph{Cross-domain ImageNet-Sketch Classification}
We also compare to the methods used for cross-domain evaluation. 
We follow the set-up advocated by \citep{wang2019learning2} 
for domain-agnostic cross-domain prediction, 
which is training the model on one or multiple domains 
without domain identifiers and test the model on an unseen domain. 
We use the most challenging setup in this scenario: 
train the models with standard ImageNet training data, 
and test the model over ImageNet-Sketch data \citep{wang2019learning}, which is a collection of sketches 
following the structure ImageNet validation set. 
We compare with previous methods with reported performance on this dataset, 
such as \textsf{InfoDrop} \citep{achille2018information}, 
\textsf{HEX} \citep{wang2019learning2}, \textsf{PAR} \citep{wang2019learning}, 
\textsf{RSC} \citep{HuangWXH20}
and report the performances in Table~\ref{tab:real:sketch:app}. 
Notice that, our data augmentation also follows the 
requirement that the characteristics of the test domain cannot 
be utilized during training. 
Thus, we only augment the samples with a generic augmentation set
($\ma$ of ``contrast'' in synthetic experiments). 
The results again support the usage of data augmentation and alignment regularization.  


\end{document}